\theoremstyle{definition}
\newtheorem{definition}{Definition}
\newtheorem{theorem}{Theorem}
\newtheorem{lemma}{Lemma}
\newtheorem{assumption}{Assumption}
\newcommand{\normal}[1]{\normalfont{\text{#1}}}
\renewcommand{\v}{\normalfont{\textbf{v}}}
\newcommand{\w}{\normalfont{\textbf{w}}}
\newcommand{\x}{\normalfont{\textbf{x}}}
\newcommand{\z}{\normalfont{\textbf{z}}}
\newcommand{\g}{\nabla}
\DeclarePairedDelimiter\p{\lparen}{\rparen}
\DeclarePairedDelimiter\abs{\lvert}{\rvert}   
\DeclarePairedDelimiter\norm{\lVert}{\rVert}  
\DeclarePairedDelimiter\bkt{[}{]}             
\DeclarePairedDelimiter\set{\{}{\}}           
\DeclarePairedDelimiter\floor{\lfloor}{\rfloor}
\def\gE{\mathbb{E}}
\def\gP{\mathbb{P}}
\def\gD{\mathcal{D}}
\def\gN{\mathcal{N}}
\def\gG{\mathcal{G}}
\DeclareMathOperator*{\argmin}{arg\,min}
\title{TornadoAggregate: Accurate and Scalable Federated Learning\\via the Ring-Based Architecture}
\author {
    Jin-woo Lee,\textsuperscript{\rm 1}
    Jaehoon Oh, \textsuperscript{\rm 1}
    Sungsu Lim, \textsuperscript{\rm 2}
    Se-Young Yun, \textsuperscript{\rm 1}
    Jae-Gil Lee, \textsuperscript{\rm 1} \\
}
\begin{document}

\maketitle

\begin{abstract}
Federated learning has emerged as a new paradigm of collaborative machine learning; however, many prior studies have used global aggregation along a star topology without much consideration of the communication scalability or the diurnal property relied on clients' local time variety. In contrast, ring architecture can resolve the scalability issue and even satisfy the diurnal property by iterating nodes without an aggregation. Nevertheless, such ring-based algorithms can inherently suffer from the high-variance problem. To this end, we propose a novel algorithm called \textbf{\textit{TornadoAggregate}} that improves both accuracy and scalability by facilitating the ring architecture. In particular, to improve the accuracy, we reformulate the loss minimization into a variance reduction problem and establish three principles to reduce variance: Ring-Aware Grouping, Small Ring, and Ring Chaining. Experimental results show that TornadoAggregate improved the test accuracy by up to $26.7\%$ and achieved near-linear scalability.
\end{abstract}

\section{Introduction}

\emph{Federated learning}\,\cite{FSVRG,FedAvg} enables mobile devices to collaboratively learn a shared model while keeping all training data on the devices, thus avoiding data transfer to the cloud or central server. One of the main reasons for this recent boom in federated learning is that it does not compromise user privacy. In this framework, star architecture\,(Figure~\ref{fig:Architecture}(a)), which involves a central parameter server aggregating and broadcasting locally learned models, has been most widely adopted in favor of its simple distributed parallelism. However, the star architecture can easily become a communication bottleneck and cannot take into account the diurnal property of federated learning\,\cite{FedAvg2,SemiCyclic}, in which the global data distribution of clients significantly varies due to the difference in the clients' local time.

Ring architecture\,(Figure~\ref{fig:Architecture}(b)), in contrast, can resolve the scalability issue and even satisfy the diurnal property by iterating nodes without a central coordinator. In addition, it has the potential to improve accuracy through an unbiased estimation of conventional centralized learning at the expense of communication cost. Notably, \citet{Astraea} proposed a star architecture with ring-based groups, while \citet{MM-PSGD} proposed a ring architecture with stars-based groups. \citet{IFCA} and \citet{SemiCyclic} proposed star-based and ring-based groups, respectively, without global communication. Other than the importance of addressing the problem, less has been addressed how a ring-based architecture should be developed from the perspective of both accuracy and scalability.




\begin{figure}[t!]
    \centering
    \includegraphics[width=.8\columnwidth]{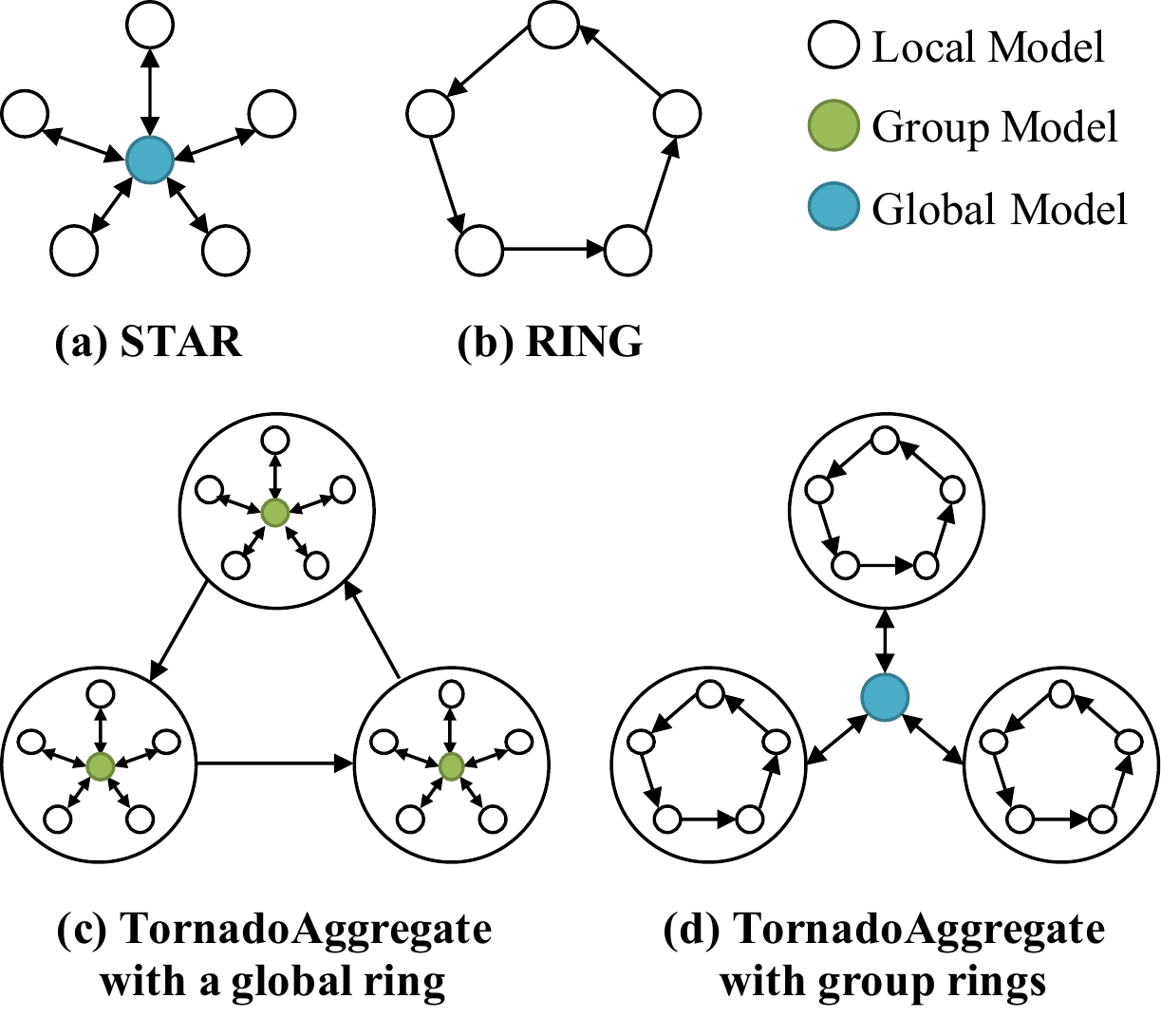}
    \caption{\textbf{Representative architectures and proposed ring-based algorithm TornadoAggregate}.}
    \label{fig:Architecture}
\end{figure}

In this paper, we propose a novel \textbf{\textit{TornadoAggregate}} algorithm that improves both accuracy and scalability by facilitating the ring architecture. To improve accuracy, in particular, TornadoAggregate aims at reducing the variance inherent in a ring iteration by considering three principles: \emph{ring-aware grouping}, \emph{small ring}, and \emph{ring chaining}. Based on the \emph{ring-aware grouping} principle, for the TornadoAggregate with a global ring\,(Figure~\ref{fig:Architecture}(c)) and group rings\,(Figure~\ref{fig:Architecture}(d)), nodes are grouped such that it reduces the inter-group variance in the global ring and inter-node variance in each group ring, respectively; the number of groups is adjusted to satisfy the \emph{small ring} principle, thus achieving the small variance; we introduce \emph{ring chaining} technique to increase the batch size with high node utilization in a ring, leading to the reduced variance.

We confirmed that TornadoAggregate achieved a higher accuracy by up to $26.7\%$ and near-linear scalability.


\section{Architectures of Federated Learning}
We first briefly describe \emph{federated learning} and then survey relevant architectures in terms of \emph{accuracy} and \emph{scalability}.

\begin{table}[t!]
\centering
\small
\begin{tabular}{ccc} \toprule
    \hspace{-3pt}\textbf{Architecture}\hspace{-10pt} & \textbf{Convergence Bound}\hspace{-10pt} & \makecell{\textbf{Communication}\\\textbf{Scalability}}\hspace{-5pt} \\ \midrule
    \textit{STAR} & O($D h(\uptau)$) & O($\abs{\gN}$) \\
    \textit{RING} & 0\,(Approximate) & O($1$) \\
    \hspace{-3pt}\textit{STAR-stars} & O($\Delta h(\uptau_1\uptau_2) + \delta h(\uptau_1)$) & O($\abs{\gN}$) \\
    \hspace{-3pt}\textit{STAR-rings} & O($D h(\uptau_1\uptau_2)$)\,(Approximate) & O($\abs{\gG}$) \\
    \hspace{-3pt}\textit{RING-stars} & O($D h(\uptau_1)$)\,(Approximate) & O($\abs{\gN}/\abs{\gG}$) \\
    \hspace{-3pt}\textit{RING-rings} & 0\,(Approximate) & O($1$) \\
    \textit{stars} & O($\delta h(\uptau)$) & O($\abs{\gN}$)) \\
    \textit{rings} & 0\,(Approximate) & O($\abs{\gG}$) \\ \bottomrule
\end{tabular}
\caption{\textbf{Comparison of architectures}. $\abs{\gN}$ and $\abs{\gG}$ denote the number of nodes and groups, respectively. $D$, $\delta$, and $\Delta$ denote the local-to-global, local-to-group, and group-to-global divergence, respectively.}
\label{tbl:Architecture}
\end{table}

\subsection{Basics of Federated Learning}
The objective of \emph{federated learning} is to find an approximate solution of Eq.~\eqref{eq:Problem}\,\cite{FedAvg}. Here, $F(\w)$ is the loss of predictions with a model $\w$ over the set of all data examples $\gD \triangleq \cup_{i \in \gN}{\gD^{i}}$ across all nodes, where $\gN$ is the set of node indices, and $F^{i}\p{\w} \triangleq \sum_{\p{\x, y} \in \gD^{i}}{\frac{1}{\abs{\gD^{i}}} l\p{\w, \x, y}}$ is the loss of predictions with a loss function $l$ parameterized by $\w$ over the set of data examples $\p{\x, y} \in \gD^{i}$ on node $i$.
\begin{equation}
\label{eq:Problem}
\resizebox{.9\columnwidth}{!}{$
    \min_{ \w \in \mathbb{R}^d }{ F\p{\w} } ~~ \normal{where} ~~ F\p{\w} \triangleq \sum_{ i \in \gN }{ \frac{ \abs{\gD^{i}} }{ \abs{\gD} } F^{i}\p{\w} }
$}
\end{equation}

To resolve Eq.~\eqref{eq:Problem}, a huge number of architectures are being actively proposed, and based on hierarchical composition, they can be classified into \emph{three} main categories: \emph{flat}, \emph{consensus group}, and \emph{pluralistic group}. Table~\ref{tbl:Architecture} compares them in terms of convergence bound and scalability, each of which is analyzed in Appendix~\ref{sec:ConvergenceAnalysis} and \ref{sec:CommScalability}, respectively.

\subsection{Flat Architecture}
\emph{Flat} represents an architecture without hierarchical composition that, in turn, includes \textit{STAR} and \textit{RING} architecture. \textit{STAR} is the same as the canonical FedAvg\,\cite{FedAvg} without node sampling, as defined by Definition~\ref{def:STAR}.
\begin{definition}
\label{def:STAR}
\textbf{\textit{STAR}} involves local update, which learns each local model $\w^{i}$ with learning rate $\eta$ by performing gradient descent steps, and \emph{global aggregation}, which learns the global model $\w$ by aggregating all $\w^{i}$ along a star topology and synchronizes $\w^{i}$ with $\w$ every $\uptau$ epochs, as in Eq.~\eqref{eq:w_STAR}.
\begin{equation}
\label{eq:w_STAR}
\resizebox{.9\columnwidth}{!}{$
\begin{multlined}
    \w^{i}_{t} \triangleq \begin{dcases}
        \w^{i}_{t-1} - \eta\g F^{i}\p{\w^{i}_{t-1}} & \normal{if} ~ t \normal{ mod } \uptau \ne 0 \\
        \w_{t} & \normal{if} ~ t \normal{ mod } \uptau = 0
    \end{dcases} \\
    \normal{where} ~~~ \w_{t} \triangleq \sum\limits_{ i \in \gN }{ \frac{ \abs{\gD^{i}} }{ \abs{\gD} }\bkt{ \w^{i}_{t-1} - \eta\g F^{i}\p{\w^{i}_{t-1}} } }
\end{multlined}
$}
\end{equation}
\end{definition}
\noindent \textit{STAR} exhibits the most simple distributed parallelism that, at the same time, leads to low scalability of O($\abs{\gN}$) due to the communication bottleneck in global aggregation.

In contrast, \textit{RING}\,\cite{PipeSGD,SemiCyclic} resolves the aforementioned scalability issue by removing the global aggregation, as defined by Definition~\ref{def:RING}.
\begin{definition}
\label{def:RING}
\textbf{\textit{RING}} extends \textit{STAR} by replacing the global aggregation with \emph{global inter-node transfer} that synchronizes a new model $\w^{i_{j}}$ on node $i_{j}$ with the previously learned model $\w^{i_{j-1}}$ on node $i_{j-1}$ every $\uptau$ epoch along a certain ring topology\footnote{As long as the conditions are satisfied, a ring can be defined in any way\,(e.g., a random permutation in Algorithm~\ref{alg:Tornado}).} $[i_{j} \in \gN | j = \floor{t / \uptau}, i_{j+\abs{\gN}} = i_{j}]$ with a period $\abs{\gN}$ to satisfy the diurnal property, as in Eq.~\eqref{eq:w_RING}.
\begin{equation}
\label{eq:w_RING}
\begin{multlined}
    \w^{i_{j}}_{t} \triangleq \begin{dcases}
        \w^{i_{j}}_{t-1} - \eta\g F^{i_{j}}\p{\w^{i_{j}}_{t-1}} & \normal{if} ~ t \normal{ mod } \uptau \ne 0 \\
        \w^{i_{j-1}}_{t} & \normal{if} ~ t \normal{ mod } \uptau = 0
    \end{dcases}
\end{multlined}
\end{equation}
\end{definition}
\noindent \textit{RING} exhibits low convergence bound, attributed to Theorem~\ref{thm:RING}, and benefits from high scalability of O($1$). However, it is considered impractical in federated learning, where a large $\abs{\gN}$ is assumed, because it takes $\abs{\gN}$ times as much communication rounds to iterate a global epoch as \textit{STAR}.
\begin{theorem}
\label{thm:RING}
\textit{RING} is an unbiased estimator of the centralized learning that learns a centralized model by assuming the federated datasets to be located at centralized storage.
\end{theorem}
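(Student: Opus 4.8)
The plan is to show that, once the datasets are pooled onto a single server, there is a completely ordinary centralized training procedure whose iterates coincide --- trajectory by trajectory --- with those produced by \textit{RING}; unbiasedness then follows at once, because the two stochastic processes turn out to be identically distributed, so no systematic error is introduced by spreading the computation along a ring.

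First I would make ``the centralized learning'' precise in a way that respects the partition $\gD = \cup_{i \in \gN} \gD^{i}$. For a fixed node index $i$, let $U^{i}$ denote the operator that maps a model $\w$ to the outcome of running the prescribed local phase on $F^{i}$, i.e.\ the fixed number $k$ (on the order of $\uptau$) of full-batch steps $\w \mapsto \w - \eta\, \g F^{i}\p{\w}$ dictated by Eq.~\eqref{eq:w_RING}. A natural ``centralized learner'' that has all the data but still processes it stratum-by-stratum is \emph{incremental (stratified) gradient descent}: draw a permutation $\sigma$ of $\gN$ and, for $j = 1, \dots, \abs{\gN}$, update a single model by $U^{\sigma(j)}$, so that after one sweep the model equals $U^{\sigma(\abs{\gN})} \circ \cdots \circ U^{\sigma(1)}\p{\w}$. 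This is a bona-fide centralized algorithm --- it is merely SGD whose mini-batches are the fixed strata $\gD^{i}$ visited in shuffled order --- so it is a legitimate reading of ``the centralized learning that learns a centralized model by assuming the federated datasets to be located at centralized storage.''

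Next I would unroll \textit{RING} over one full ring period. Between two consecutive global inter-node transfers exactly one node $i_{j}$ holds the model and performs the $k$ local epochs of Eq.~\eqref{eq:w_RING}, and the transfer at $t \bmod \uptau = 0$ is a \emph{replacement}, not an average, hence information-lossless: it simply hands the current model to the next node. Writing $\w_{(j)}$ for the model entering block $j$, this gives $\w_{(j+1)} = U^{i_{j}}\p{\w_{(j)}}$, so after the $\abs{\gN}$ blocks of one period the model equals $U^{i_{\abs{\gN}}} \circ \cdots \circ U^{i_{1}}\p{\w_{(1)}}$ --- identical in form to the centralized expression above. Because the ring is, by default, a uniform random permutation (cf.\ the footnote to Definition~\ref{def:RING} and the random permutation used in Algorithm~\ref{alg:Tornado}), coupling $\sigma$ to that very permutation makes the \textit{RING} trajectory and the stratified-centralized trajectory equal path by path; hence the two processes induce the same law, and in particular $\gE\bkt{\w_{\text{RING}}} = \gE\bkt{\w_{\text{centralized}}}$. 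Thus \textit{RING} carries no systematic bias relative to centralized learning --- in contrast with \textit{STAR}, whose post-local \emph{averaging} injects the local-to-global divergence $D$ responsible for its $O(D\, h(\uptau))$ bound in Table~\ref{tbl:Architecture} --- which is precisely the sense in which \textit{RING}'s convergence bound is $0$. I would close by remarking that the variance of this estimator is \emph{not} zero: it is the variance of shuffled stratum-SGD, i.e.\ exactly the high-variance phenomenon the rest of the paper sets out to suppress.

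I expect the main obstacle to be bookkeeping rather than any deep estimate: one must pin down a definition of centralized learning for which the correspondence is an exact identity rather than an $O(\eta)$ approximation --- the incremental-GD reading above does this --- and it matters that the per-block epoch count and, crucially, the fact that the inter-node transfer \emph{copies} rather than \emph{averages} line up on both sides. The only genuinely probabilistic subtlety is that ``unbiased'' must be read through this coupling (the expectation is over the random ring order, with the centralized learner visiting strata in that same order), so what one actually proves is distributional equivalence of the two processes, from which equal expectations are immediate --- not the weaker and generally false assertion that the expectation of a composition of the $U^{i}$ equals the composition of their expectations. A secondary point worth flagging is that reconciling this stratified learner with the specifically $\abs{\gD^{i}}/\abs{\gD}$-weighted objective $F$ of Eq.~\eqref{eq:Problem} is only approximate, which is the origin of the ``Approximate'' qualifier attached to \textit{RING} in Table~\ref{tbl:Architecture}.
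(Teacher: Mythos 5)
Your reading of the ring mechanics is right, and you have independently found the key reinterpretation the paper uses: because the inter-node transfer at $t \bmod \uptau = 0$ \emph{copies} rather than averages, one may equivalently regard the model as sitting still while the data strata $\gD^{i}$ visit it in ring order, so that each step is $\w_{t} = \w_{t-1} - \eta\,\g F^{i}\p{\w_{t-1}}$ as in Eq.~\eqref{eq:redefined_w_RING}. Where you diverge is in what you compare this process \emph{to}. The paper takes ``the centralized learning'' to be full-batch gradient descent on the pooled objective, $\w_{t} = \w_{t-1} - \eta\,\g F\p{\w_{t-1}}$ (Eq.~\eqref{eq:w_CL}), and the entire content of the theorem is the per-step identity $\gE_{i}\bkt{\g F^{i}\p{\w}} = \sum_{i\in\gN}\frac{\abs{\gD^{i}}}{\abs{\gD}}\g F^{i}\p{\w} = \g F\p{\w}$ (Eq.~\eqref{eq:RING-CL}), i.e.\ the ring step is an unbiased \emph{gradient} estimator of the full-batch step, in exactly the sense that a stochastic gradient is unbiased for SGD. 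You instead define the centralized learner to \emph{be} shuffled stratified incremental GD and prove it is pathwise identical to \textit{RING} under a coupling of the permutations. That is true but vacuous --- you have shown \textit{RING} equals a re-housed copy of itself --- and it is inconsistent with the rest of the paper: Theorem~\ref{thm:RING-Variance} asserts that \textit{RING} has strictly higher variance than ``the centralized learning (unbiased estimator of \textit{RING} from Theorem~\ref{thm:RING}),'' which is impossible if the two processes are identically distributed. So the referent of ``centralized learning'' cannot be your stratified learner; it must be the deterministic full-batch recursion, and the unbiasedness must be the per-step gradient sense.

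Two smaller points. First, your caution that ``the expectation of a composition of the $U^{i}$ is not the composition of expectations'' is well taken --- the paper's own proof glosses over exactly this when it says the recursion ``equals the centralized learning in expectation'' --- but the intended (and defensible) claim is only the one-step gradient unbiasedness above, not $\gE\bkt{\w_{t}^{\text{RING}}} = \w_{t}^{\text{central}}$ for all $t$. Second, your worry that reconciling the stratified learner with the $\abs{\gD^{i}}/\abs{\gD}$-weighted objective of Eq.~\eqref{eq:Problem} is ``only approximate'' is misplaced: in the paper's argument the expectation over $i$ is taken with probabilities proportional to $\abs{\gD^{i}}$, which makes Eq.~\eqref{eq:RING-CL} an exact identity; the ``Approximate'' qualifier in Table~\ref{tbl:Architecture} instead refers to replacing $\g F^{i}$ by its expectation when plugging into the divergence-based convergence bounds, not to a weighting mismatch.
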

\noindent Owing to lack of space, we defer all proofs to Appendix~\ref{sec:Proof}.

\subsection{Consensus Group Architecture}
\emph{Consensus group} represents an architecture with group hierarchy and global communication to reach a global consensus among groups, which, in turn, includes \emph{four} architectural combinations: \textit{STAR-stars}, \textit{STAR-rings}, \textit{RING-stars}, and \textit{RING-rings}. First, \textit{STAR-stars}\,\cite{lin2018don,bonawitz2019towards,HierFAVG,HFEL,HeteroCell} mitigates the non-IID issue via group-based learning\,\cite{IIDSharing}, which leads to improved accuracy, as defined by Definition~\ref{def:STAR-stars}.
\begin{definition}
\label{def:STAR-stars}
\textbf{\textit{STAR-stars}} extends \textit{STAR} by additionally allowing multiple intermediate group star aggregations, thus postfixed by \textit{stars}. In particular, the set of all node indices $\gN$ is partitioned into sets of node indices for $\abs{\gG}$ node groups $\set{\gN^{k}}_{k=1 \ldots \abs{\gG}}$, where $\cup_{ k \in \gG }{ \gN^{k} } = \gN$ and $\forall k \ne l$, $\gN^{k} \cap \gN^l = \emptyset$, and $\gD^{k} \triangleq \cup_{i \in \gN^{k}}{\gD^{k,i}}$. Then, for each group, it learns the group model $\w^{k}$ by aggregating all local models $\w^{k,i}$ along a group star topology and synchronizes $\w^{k,i}$ with $\w^{k}$ every $\uptau_1$ epochs, as shown by Eq.~\eqref{eq:w_STAR-stars}. Similar to Eq.~\eqref{eq:w_STAR}, a global aggregation is performed every $\uptau_1\uptau_2$ steps.
\begin{equation}
\label{eq:w_STAR-stars}
\begin{aligned}
    \w^{k,i}_{t} \triangleq & \begin{dcases}
        \w^{k,i}_{t-1} - \eta\g F^{k,i}\p{\w^{k,i}_{t-1}} & \normal{if} ~ t \normal{ mod } \uptau_1 \ne 0 \\
        \w^{k}_{t} & \normal{if} ~ \begin{aligned}[t]
            & t \normal{ mod } \uptau_1 = 0, \\
            & t \normal{ mod } \uptau_1\uptau_2 \ne 0
        \end{aligned} \\
        \w_{t} & \normal{if} ~ t \normal{ mod } \uptau_1\uptau_2 = 0
    \end{dcases} \\
    \mbox{where} & ~~~ \w^{k}_{t} \triangleq \sum\limits_{ i \in \gN^{k} }{ \frac{ \abs{\gD^{k,i}} }{ \abs{\gD^{k}} }\bkt{ \w^{k,i}_{t-1} - \eta\g F^{k,i}\p{\w^{k,i}_{t-1}} } } \\
    \mbox{and} & ~~~ \w_{t} \triangleq \sum\limits_{ k \in \gG }{ \frac{ \abs{\gD^{k}} }{ \abs{\gD} }\w^{k}_{t} }
\end{aligned}
\end{equation}
\end{definition}
\noindent \textit{STAR-stars} is known to improve \textit{STAR} under certain parameter settings in favor of the non-IID mitigation\,\cite{HierFAVG}, but it exhibits the low scalability of O($\abs{\gN}$).


Next, analogous to the development of \textit{RING}, \textit{STAR-rings}\,\cite{Astraea}, \textit{RING-stars}\,\cite{TurboAggregate,MM-PSGD}, and \textit{RING-rings}\,\cite{SemiCyclic} also aim at improving both convergence bound and scalability while sacrificing communication cost, which are summarized in Table~\ref{tbl:Architecture}. Formal definitions are as follows.
\begin{definition}
\label{def:STAR-rings}
Similar to Definition~\ref{def:RING}, \textbf{\textit{STAR-rings}} extends \textit{STAR-stars} by replacing the group aggregation with \emph{group inter-node transfer} that, for each group $k \in \gG$, synchronizes a new model $\w^{k,i_{j}}$ on node $i_{j} \in \gN^{k}$ with the previously learned model $\w^{k,i_{j-1}}$ on node $i_{j-1}$ every $\uptau_1$ epochs along a certain ring topology $[i_{j} \in \gN^{k} | j = \floor{t / \uptau_1}, i_{j+\abs{\gN^{k}}} = i_{j}]$ with a period $\abs{\gN^{k}}$ to satisfy the diurnal property within each group. In short, $\w^{k,i}_{t} \triangleq \w^{k}_{t}$\,(group aggregation) of Eq.~\eqref{eq:w_STAR-stars} is replaced with $\w^{k,i_{j}}_{t} \triangleq \w^{k,i_{j-1}}_{t}$\,(group inter-node transfer).
\end{definition}
%

\begin{definition}
\label{def:RING-stars}
Similar to Definition~\ref{def:STAR-rings}, \textbf{\textit{RING-stars}} extends \textit{STAR-stars} by replacing the global aggregation with \emph{global inter-group transfer} that synchronizes a new local model $\w^{k_{l},i}$ on node $i \in \gN^{k_{l}}$ with the previously learned group model $\w^{k_{l-1}}$ in group $k_{l-1}$ every $\uptau_1\uptau_2$ steps along a certain ring topology $[k_{l} \in \gG | l = \floor{t / \p{\uptau_1\uptau_2}}, k_{l+\abs{\gG}} = k_{l}]$ with a period $\abs{\gG}$ to satisfy the diurnal property across all groups. In short, $\w^{k,i}_{t} \triangleq \w_{t}$\,(global aggregation) of Eq.~\eqref{eq:w_STAR-stars} is replaced with $\w^{k_{l},i}_{t} \triangleq \w^{k_{l-1}}_{t}$\,(global inter-group transfer).
\end{definition}
%
%

\begin{definition}
\textbf{\textit{RING-rings}} extends \textit{STAR-stars} by replacing the group and global aggregation with the group inter-node and global inter-group transfer, respectively.
\end{definition}
%
%
\noindent Similar to \textit{RING}, \textit{RING-rings} is considered impractical in federated learning due to large number of nodes.

\subsection{Pluralistic Group Architecture}
\emph{Pluralistic group} represents an architecture with group hierarchy and without global communication to develop more independent and specialized group models than the aforementioned consensus model, which leads to decreased non-IIDness, and consequently, improved accuracy. Representative \emph{pluralistic group} includes \textit{stars} and \textit{rings}. Recently, \emph{stars}\,\cite{RobustHetero,IFCA,FeSEM,FLHC,RecursiveBi} has received great attention, which is defined by Definition~\ref{def:stars}.
\begin{definition}
\label{def:stars}
\textbf{\textit{stars}} is defined as \textit{STAR-stars} without global aggregation. Unlike Eq.~\eqref{eq:w_STAR-stars}, $\w^{k,i}_{t}$ is not synchronized with $\w_{t}$ and thus group communication rounds $\uptau_2$ is not defined.
\end{definition}
\begin{definition}
\label{def:rings}
Similar to Definition~\ref{def:stars}, \textbf{\textit{rings}} is defined as \textit{STAR-rings} without global aggregation.
\end{definition}
\noindent It is important to note that the growing popularity of pluralistic group architectures may be hype. According to Theorem~\ref{thm:stars}, \textit{stars} may achieve lower accuracy than \textit{STAR}.
\begin{theorem}
\label{thm:stars}
The convergence bound O($\delta h(\uptau)$) of \textit{stars} doesn't necessarily be better than O($D h(\uptau)$) of \textit{STAR}.
\end{theorem}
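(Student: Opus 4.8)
The statement is the rejection of the naive inference that $\delta\le D$ forces O($\delta h(\uptau)$) $\le$ O($D h(\uptau)$). Since $h(\uptau)$ is the common factor, one route is to exhibit a single federated-learning instance, together with a partition of its nodes into groups $\set{\gN^{k}}$, on which $\delta\ge D$; that already makes the two bounds incomparable on that instance (indeed it reverses them). I would look for the smallest such witness and evaluate the divergences in closed form rather than argue in generality.

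Concretely, the first step is to instantiate a toy problem where $D$ and $\delta$ (as defined in Appendix~\ref{sec:ConvergenceAnalysis}) collapse to elementary quantities: take scalar quadratic local losses $F^{i}(\w)=\tfrac{1}{2}(\w-a_{i})^{2}$ with equal dataset sizes, so that $\g F^{i}(\w)-\g F(\w)=\bar{a}-a_{i}$ and $\g F^{i}(\w)-\g F^{k}(\w)=\bar{a}^{k}-a_{i}$, where $\bar{a}$ is the global mean of the $a_{i}$ and $\bar{a}^{k}$ the within-group mean. Then $D$ and $\delta$ reduce to (a norm of) the average deviation of the $a_{i}$ around $\bar{a}$ and around their group means $\bar{a}^{k}$, respectively. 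The second step is to choose the $a_{i}$ and the partition adversarially: an unbalanced grouping -- one group absorbing the two extreme nodes and the remaining near-central nodes split off -- makes the average local-to-group deviation strictly exceed the average local-to-global deviation, so $\delta>D$ on that instance; already the trivial one-group partition, for which \textit{stars} degenerates to \textit{STAR}, gives $\delta=D$, which by itself suffices for ``not necessarily better.'' Substituting into the bounds of Table~\ref{tbl:Architecture} then yields O($\delta h(\uptau)$) $\ge$ O($D h(\uptau)$) here.

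I would then add the conceptual reason so the result is not merely a pathology. The bound O($\delta h(\uptau)$) certifies convergence of each group model $\w^{k}$ to its \emph{group} optimum $\argmin F^{k}$, whereas O($D h(\uptau)$) certifies convergence of $\w$ to the \emph{global} optimum $\argmin F$. Because \textit{stars} never performs a global aggregation (Definition~\ref{def:stars}), its group models keep an irreducible bias toward group optima whose magnitude on the global objective $F$ is governed by the group-to-global divergence $\Delta$ -- a quantity absent from O($\delta h(\uptau)$). Hence when $\Delta$ is large and $\uptau$ small, \textit{STAR}'s error O($D h(\uptau)$) shrinks with the communication frequency while \textit{stars} stalls at an error of order $\Delta$ on $F$; this is a second, architecture-level witness that O($\delta h(\uptau)$) is not automatically the better bound once both are read as statements about accuracy on $F$.

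The main obstacle is that the two O($\cdot$) bounds hide constants and, more importantly, bound \emph{different} error notions, so the phrase ``better bound'' is only meaningful after fixing a common reference; the clean way around this is the explicit instance above, where the same quantity appears on both sides. The effort then goes into choosing an instance small enough that $D$, $\delta$ (and, for the second argument, $\Delta$ and $\argmin F^{k}$) are available in closed form, yet rich enough -- at least two distinct local distributions and an unbalanced partition -- that $\delta\not<D$. One subtlety to watch: if the divergences in Appendix~\ref{sec:ConvergenceAnalysis} are defined through a squared deviation, then $\delta\le D$ holds unconditionally by the within-group/total sum-of-squares decomposition, so only the $\Delta$-bias argument survives, and one must then verify that the bias term genuinely cannot be absorbed into O($\delta h(\uptau)$) -- which is where I expect to spend the most care.
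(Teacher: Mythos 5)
Your proposal is correct but takes a genuinely different route from the paper. The paper's proof never constructs an instance: it applies the triangle inequality in both directions to get $\delta \le \Delta + D$ and $D \le \delta + \Delta$, and then argues that since the worst case of $\delta$ is $\Delta + D > D$, \textit{STAR} can have the lower bound. That argument is really a statement about upper bounds on $\delta$ rather than about $\delta$ itself, so your explicit-witness strategy is arguably the more rigorous one: with quadratic losses $F^{i}(\w)=\tfrac{1}{2}(\w-a_{i})^{2}$ the gradient differences are constant in $\w$, the $\max_{\w}$ in Definition~\ref{def:GradientDivergence} is trivial, and $\delta$, $D$ collapse to weighted mean absolute deviations about group and global means. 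Your own caveat about squared deviations is moot here — the paper's divergences use unsquared norms, so the ANOVA-type inequality $\delta\le D$ does \emph{not} hold, and strict $\delta>D$ is attainable (e.g.\ groups $\{0,0,3\}$ and $\{-3,0,0\}$ give within-group deviation sum $8$ versus global deviation sum $6$). One small correction to your sketch: the specific ``two extremes in one group, centers in the other'' partition yields $\delta=D$ rather than $\delta>D$, because the extreme group's mean coincides with the global mean; a strict witness needs a group whose mean is displaced from the bulk of its members. But as you note, even the degenerate one-group partition with $\delta=D$ already establishes ``not necessarily better,'' so your plan succeeds. Your second, $\Delta$-bias argument about \textit{stars} converging to group optima is a useful conceptual supplement that the paper does not make, though it compares different error notions and so should be kept separate from the formal claim about the two O($\cdot$) expressions.
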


%
%
\noindent Lastly, as shown in Table~\ref{tbl:Architecture}, \textit{rings} can benefit from the low convergence bound as well as high scalability of O($\abs{\gG}$).
%

\section{Reformulation: Variance Reduction}
As previously noted, ring-based architectures such as \textit{RING}, \textit{STAR-rings}, \textit{RING-stars}, \textit{Ring-rings}, and \textit{rings} have great potential to improve both accuracy and scalability. However, the convergence analysis framework introduced in this study is mostly based on unbiasedness property to easily compare all of the architectures. To better understand architectures from the perspective of accuracy, the variance should also be further considered. To this end, based on Theorem~\ref{thm:RING-Variance}, we reformulate the problem of Eq.~\eqref{eq:Problem} to the variance reduction of ring-based architectures.
\begin{theorem}
\label{thm:RING-Variance}
\textit{RING} exhibits higher variance than the centralized learning\,(unbiased estimator of \textit{RING} from Theorem~\ref{thm:RING}).
\end{theorem}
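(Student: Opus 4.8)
The plan is to compare, update by update, the stochastic step that \textit{RING} applies against the deterministic step of centralized learning, and to show that the former equals the latter plus a zero-mean perturbation whose second moment is strictly positive precisely because the clients are non-IID. Fix an iterate $\w$ reached at a transfer boundary and consider one local update. Centralized learning moves by $-\eta\g F\p{\w} = -\eta\sum_{i \in \gN}\frac{\abs{\gD^{i}}}{\abs{\gD}}\g F^{i}\p{\w}$, which is a deterministic function of $\w$. \textit{RING} instead moves by $-\eta\g F^{i_{j}}\p{\w}$, where $i_{j}$ is the node currently holding the model and, by Theorem~\ref{thm:RING}, the ring ordering is drawn so that $\gE\bkt{\g F^{i_{j}}\p{\w}} = \g F\p{\w}$; thus the \textit{RING} update is an unbiased estimate of the centralized update.

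First I would make the claim precise by fixing a notion of variance: for a random vector $X$ with $\gE\bkt{X} = \mu$, write $\mathrm{Var}\p{X} \triangleq \gE\norm{X - \mu}^{2}$. Then for the single-step updates the centralized one is deterministic, so $\mathrm{Var}\p{\text{centralized}} = 0$, whereas
\begin{equation*}
\mathrm{Var}\p{\text{RING}} = \eta^{2}\,\gE_{i_{j}}\norm{\g F^{i_{j}}\p{\w} - \g F\p{\w}}^{2} \ge 0 ,
\end{equation*}
with equality if and only if $\g F^{i}\p{\w} = \g F\p{\w}$ for every node $i$ in the support of the sampling distribution, i.e.\ the local gradients are homogeneous. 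Under the non-IID regime assumed throughout federated learning this quantity is strictly positive, giving $\mathrm{Var}\p{\text{RING}} > 0 = \mathrm{Var}\p{\text{centralized}}$.

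Next I would lift this from one step to a full ring traversal, since the statement should refer to the iterate obtained after the model has cycled through all $\abs{\gN}$ nodes. The natural route is an induction over the $\uptau$ local steps and the $\abs{\gN}$ inter-node transfers: using $L$-smoothness of each $F^{i}$ to control how the noise injected at a given step is amplified by the subsequent (nonlinear) gradient maps, one shows the accumulated deviation of the \textit{RING} iterate from the centralized iterate stays zero-mean with a second moment that is a positive multiple of the aggregate gradient dissimilarity $\sum_{i \in \gN}\frac{\abs{\gD^{i}}}{\abs{\gD}}\norm{\g F^{i} - \g F}^{2}$. Since the centralized trajectory carries no randomness at any step, every term contributed along the way can only add to the variance, so the strict inequality survives to the end of the epoch.

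The main obstacle I anticipate is exactly this propagation: the composition of gradient-descent maps is nonlinear, so the per-step noises do not simply add, and one must either invoke a smoothness / bounded-dissimilarity assumption (standard in the FedAvg convergence analyses) to keep the recursion under control, or restrict the formal claim to the one-step estimator, where the argument is immediate. Being explicit about which notion of ``variance'' is intended---that of the gradient estimator versus that of the resulting iterate---and about the mild assumptions needed for the multi-step version is the delicate part; the inequality itself then reduces to the observation that a deterministic quantity has zero variance while any genuinely stochastic unbiased surrogate of it does not.
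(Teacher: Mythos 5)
Your one-step argument is correct and rests on the same two facts as the paper's proof---the unbiasedness $\gE_{i}\bkt{\g F^{i}\p{\w}} = \g F\p{\w}$ from Theorem~\ref{thm:RING} and the bias--variance identity $\gE_{i}\norm{\g F^{i}\p{\w} - \g F\p{\w}}^{2} = \gE_{i}\norm{\g F^{i}\p{\w}}^{2} - \norm{\gE_{i}\g F^{i}\p{\w}}^{2}$---but you package the conclusion differently. You measure the variance of the update vector itself: the centralized step $-\eta\g F\p{\w}$ is deterministic, while the \textit{RING} step $-\eta\g F^{i}\p{\w}$ is a zero-mean perturbation of it with second central moment $\eta^{2}\gE_{i}\norm{\g F^{i}\p{\w}-\g F\p{\w}}^{2}$, strictly positive whenever the local gradients are heterogeneous. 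The paper instead applies the $\beta$-smoothness descent lemma to $F$ for both update rules and, after taking $\gE_{i}$, exhibits the extra additive term $\frac{\eta^{2}\beta}{2}\bkt{\gE_{i}\norm{\g F^{i}\p{\w_{t-1}}}^{2} - \norm{\gE_{i}\g F^{i}\p{\w_{t-1}}}^{2}}$ in \textit{RING}'s bound relative to the centralized bound of Eq.~\eqref{eq:bound_w_CL}; that is, it locates the very same variance quantity as a degradation of the guaranteed per-step decrease of the loss. Your route is more elementary (it needs no smoothness) and proves a strict inequality on the literal variance of the gradient estimator; the paper's route buys a direct link to convergence behavior, which is what the reformulation section and Table~\ref{tbl:Architecture} actually exploit. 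Your anticipated difficulty about propagating the noise through a full ring traversal is genuine but turns out to be moot: the paper, like your fallback position, only makes the per-step claim, so no multi-step induction is required.
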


\section{Proposed Algorithm: TornadoAggregate}
\label{sec:Algorithm}
The ring-based federated learning under high variance issue looks similar to the continual learning under catastrophic forgetting\,\cite{parisi2019continual}, but the former additionally involves partitioned data groups as well as data iteration along a ring. Considering the differences, we establish \emph{three} principles to reduce variance.
\begin{itemize}
\item
    \textbf{Principle 1\,(Ring-Aware Grouping)}: For architectures with group rings, nodes should be clustered so that inter-node variance becomes low within a group. On the other hand, for architectures with a global ring, nodes should be IID grouped so that inter-group variance becomes low.
\item
    \textbf{Principle 2\,(Small Ring)}: It is straightforward that, the smaller a ring, the lower its iteration variance.
\item
    \textbf{Principle 3\,(Ring Chaining)}: A ring can have multiple iteration chains\,\cite{MM-PSGD}, each of which iterates the same ring at a different starting node and thus learns an unbiased model different from each other. Multiple chains can reduce learning variance, which is attributed to the reduced variance from increased batch size.
\end{itemize}

Based on the abovementioned principles, we propose a novel ring-based algorithm called \textbf{\textit{TornadoAggregate}} and derive \emph{two} heuristics according to the architecture type. We refer to TornadoAggregate with \textit{RING-stars} and \textit{STAR-rings} as Tornado and Tornadoes, respectively. In particular, for the \emph{ring-aware grouping} principle, Tornado and Tornadoes require nodes to be IID grouped and clustered, respectively; for the \emph{small ring} principle, Tornado and Tornadoes require a small and large number of groups, respectively; for the \emph{ring chaining} principle, both require a large number of chains.

\begin{algorithm}[t!]
\small
\SetAlgoLined
\SetKwInOut{Input}{\textsc{Input}}
\SetKwInOut{Output}{\textsc{Output}}
\caption{Tornado\,(RING-stars)}
\label{alg:Tornado}
\Input{$\gN$, $\abs{\gG}$, $C$, $\uptau_1$, $\uptau_2$}
\Output{$\w_{T}$}
Initialize $\set{\w^{k,i}_{0}}_{i \in \gN}$ to a random model $\w_{0}$ \\
Initialize a random ring $\bkt{k_{l} \in \gG | l \in \mathbb{N}^0, k_{l+\abs{\gG}} = k_{l}}$ \\
$\set{\gN^{k}}_{k \in \gG} \leftarrow \textsc{Group\_By\_IID}\p{\gN}$ \tcp{Algorithm \ref{alg:Grouping}}
\For{$t \leftarrow 0, \cdots, T-1$}{
    \For{\normal{\textbf{each}} chain $c \leftarrow 0, \cdots, C-1$ \normal{\textbf{in parallel}}}{
        $l \leftarrow \floor{t / \p{\uptau_1\uptau_2}}$, $k \leftarrow \p{k_{l}+c} \normal{ mod } \abs{\gG}$ \\
        \For{\normal{\textbf{each}} node $i \in \gN^{k}$ \normal{\textbf{in parallel}}}{
            $\w^{k,i}_{t+1} \leftarrow \w^{k,i}_{t} - \eta\g F^{k,i}\p{\w^{k,i}_{t}}$
        }
        \If{$t \normal{ mod } \uptau_1 \normal{ \textbf{and} } t \normal{ mod } \uptau_1\uptau_2 \ne 0$}{
            $\set{\w^{k,i}_{t}}_{i \in \gN^{k}} \leftarrow \sum_{i \in \gN^{k}}{\frac{\abs{\gD^{k,i}}}{\abs{\gD^{k}}}\w^{k,i}_{t}}$
        }
        \If{$t \normal{ mod } \uptau_1\uptau_2 = 0$}{
            $k_{next} \leftarrow \p{k_{l+1}+c} \normal{ mod } \abs{\gG}$ \\
            $\set{\w^{k_{next},i}_{t}}_{i \in \gN^{k_{next}}} \leftarrow \sum_{i \in \gN^{k}}{\frac{\abs{\gD^{k,i}}}{\abs{\gD^{k}}}\w^{k,i}_{t}}$
        }
    }
}
\end{algorithm}

As shown in Algorithm \ref{alg:Tornado}, Tornado takes the node set $\gN$ and the number of groups $\abs{\gG}$, chains $C$, epochs $\uptau_1$, and communication rounds $\uptau_2$ as input and returns the final model $\w_{T}$ as output. It begins by initializing all local models $\w^{k,i}_0$, a randomly permuted inter-group ring $\bkt{k_{l}}$, and group indices $\set{\gN^{k}}$ via the IID node grouping\,(Lines 1--3). Then, for each chain and each node, the local updates are performed\,(Lines 5--8); each group model is learned by aggregating all local models every $\uptau_1$ epochs\,(Lines 9--10) and is transferred to all nodes in the next group $k_{next}$ every $\uptau_1\uptau_2$ steps\,(Lines 11--13). Overall, Lines 4--13 repeat for $T$ steps.

Because Tornado and Tornadoes are inherently correlated, we defer the description of Tornadoes to Appendix~\ref{sec:AlgorithmDetail}.

\section{Evaluation}

\subsection{Experimental Setting}

\subsubsection{Benchmark Datasets and Models} We used \emph{two} official benchmark datasets and models provided by FedML.
\begin{itemize}
\item
    \textbf{FedShakespeare on RNN} consists of 715 nodes with 16068 train and 2356 test examples. RNN is the same as the one proposed by \citet{FedAvg}.
\item
    \textbf{MNIST on logistic regression} consists of 1000 nodes with 10 classes of 61664 train and 7371 test examples.
\end{itemize}

\subsubsection{Algorithms}


\begin{table}[t!]
\centering
\small
\begin{tabular}{cccc} \toprule
    \textbf{Algorithm} & \textbf{Architecture} & \makecell{\textbf{Group}\\\textbf{Type}} & \makecell{\textbf{\#}\\\textbf{Chain}} \\ \midrule
    FedAvg(\citeauthor{FedAvg}) \hspace{-5pt} & \textit{STAR} & - & - \\
    IFCA(\citeauthor{IFCA}) & \textit{stars} & Cluster & - \\
    HierFAVG(\citeauthor{HierFAVG}) & \textit{STAR-stars} & Random & - \\
    Astraea(\citeauthor{Astraea}) & \textit{STAR-rings} & IID & 1 \\
    MM-PSGD(\citeauthor{MM-PSGD}) & \textit{RING-stars} & Cluster & 1 \\ \midrule
    \textbf{Tornado\,(Proposed)} & \textit{RING-stars} & IID & $\abs{\gG}$ \\
    \textbf{Tornadoes\,(Proposed)} & \textit{STAR-rings} & Cluster & $\abs{\gG}$ \\ \bottomrule
\end{tabular}
\caption{\textbf{Comparison of algorithms}.}
\label{tbl:Algorithm}
\end{table}

In Table~\ref{tbl:Algorithm}, the proposed Tornado and Tornadoes are compared with \emph{five} state-of-the-art algorithms in terms of group type and number of chains. 

\subsection{Results}

\begin{figure}[t!]
    \centering
    \includegraphics[width=.6\columnwidth]{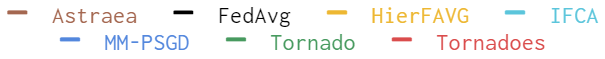}
    \includegraphics[width=.7\columnwidth]{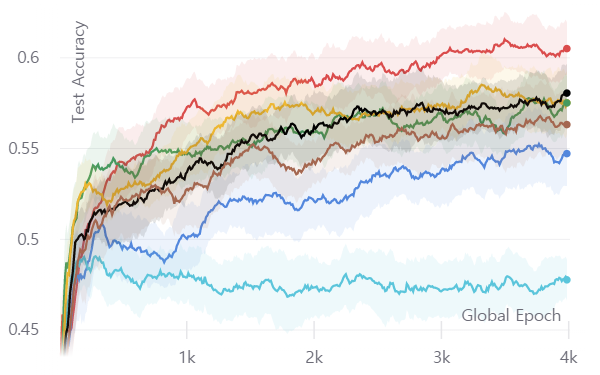}
    \caption{\textbf{Test accuracy for FedShakespeare}.}
\label{fig:FedShakespeare-TestAccuracy}
\end{figure}

Figure~\ref{fig:FedShakespeare-TestAccuracy} shows the test accuracy for FedShakespeare dataset. Tornadoes outperformed the state-of-the-art algorithms by up to $26.7\%$ and the next best group\,(Tornado, FedAvg, and HierFAVG) by $4.4\%$ on average. The low performance of Tornado relative to Tornadoes is attributed to the communication interval; Tornado(\textit{RING-stars}) takes $\uptau_1\uptau_2$ steps for a global inter-group transfer in the \textit{RING}, which is larger than $\uptau_1$ steps of Tornadoes\,(\textit{STAR-rings}) for a group inter-node transfer in each \textit{ring}, thus causing higher divergence. The poor performances of Astraea and MM-PSGD come from the high variance caused by the inappropriate node grouping and low chain utilization. Lastly, IFCA achieved the worst accuracy due to the difficulty of clustering FedShakespeare dataset, as explained by the small clustering cost reduction of only $4.3\%$, in which case the relationship between FedAvg and IFCA is consistent with Theorem~\ref{thm:stars}.

\begin{figure}[t!]
    \centering
    \includegraphics[width=.6\columnwidth]{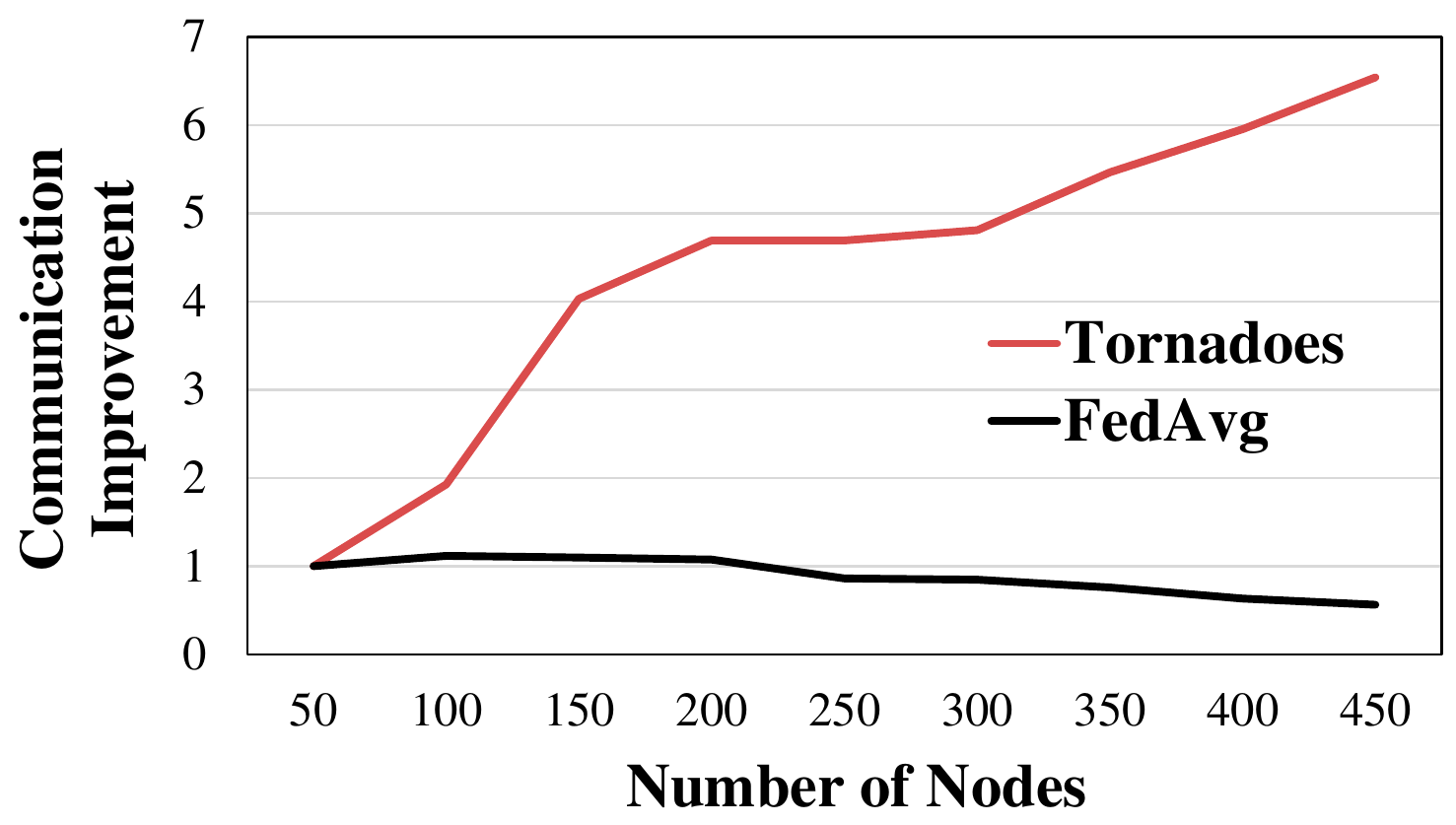}
    \caption{\textbf{Communication scalability to node size}.}
    \label{fig:Scalability}
\end{figure}

Figure~\ref{fig:Scalability} shows the communication scalability to the number of nodes ranging from $50$ to $500$ for MNIST dataset. For each case, we measured the communication data size in bytes to reach the converged train accuracy of the case with $50$ nodes\,($73\%$ for FedAvg and $79\%$ for Tornadoes) and showed the improvement relative to that case. Tornadoes achieved near-linear scalability,
 which is attributed to the superior communication scalability cost of \textit{STAR-rings}.
\vspace{-3pt}

\section{Acknowledgements}
This work was supported by Institute of Information \& Communications Technology Planning \& Evaluation\,(IITP) grant funded by the Korea government\,(MSIT) (No. 2020-0-00862, DB4DL: High-Usability and Performance In-Memory Distributed DBMS for Deep Learning).
\vspace{-3pt}

\section{Conclusion}
In this paper, we provided a comprehensive survey of learning architectures in terms of accuracy and scalability.
Our formal analysis led to the necessity of ring-based architecture and its inherent variance reduction problem. To this end, we proposed a novel ring-based algorithm \textbf{\textit{TornadoAggregate}} that improves both scalability and accuracy by reducing variance in a ring iteration.
Experimental results show that, compared with the state-of-the-art algorithms, TornadoAggregate improved the test accuracy by up to $26.7\%$ and achieved near-linear scalability. Overall, we believe that our novel ring-based algorithm has made important steps towards accurate and scalable federated learning.
\bibliography{reference.bib}

\appendix

\section{Convergence Analysis}
\label{sec:ConvergenceAnalysis}

In this section, we analyze convergence for \textit{STAR-stars} and then extend it to the rest of architectures.

\subsection{Convergence Analysis for \textit{STAR-stars}}

First of all, we make the following assumption for the loss function $ F^{k,i} $, as in many other relevant studies\,\cite{HierFAVG,AdaptiveFL}.

\begin{assumption}
\label{Assumption}
    For every $i$ and $k$, (1) $ F^{k,i} $ is convex;
    (2) $ F^{k,i} $ is $ \rho $-Lipschitz, i.e., $ \norm{ F^{k,i}(\w) - F^{k,i}(\w') } \le \rho\norm{ \w - \w' } $ for any $\w$ and $ \w' $; and
    (3) $ F^{k,i} $ is $ \beta $-smooth, i.e., $ \norm{ \g F^{k,i}(\w) - \g F^{k,i}(\w') } \le \beta\norm{ \w - \w' } $ for any $\w$ and $ \w' $.
\end{assumption}

Under this assumption, Lemma~\ref{lemma:Assumption} holds for the group and global loss functions. $F_k$, which is the loss function for a node group, is additionally considered here unlike \citet{AdaptiveFL}.

\begin{lemma}
\label{lemma:Assumption}
    $F$ and $F^{k}$ are convex, $ \rho $-Lipschitz, and $ \beta $-smooth.
\end{lemma}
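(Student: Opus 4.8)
The plan is to show that each of the three regularity properties---convexity, $\rho$-Lipschitzness, and $\beta$-smoothness---is preserved under the convex combinations that define $F^{k}$ and $F$ from the per-node losses $F^{k,i}$. Recall $F^{k}(\w) = \sum_{i \in \gN^{k}} \frac{\abs{\gD^{k,i}}}{\abs{\gD^{k}}} F^{k,i}(\w)$ and $F(\w) = \sum_{k \in \gG} \frac{\abs{\gD^{k}}}{\abs{\gD}} F^{k}(\w)$, so both are finite convex combinations (the coefficients are nonnegative and sum to $1$) of the functions assumed to satisfy Assumption~\ref{Assumption}. Since $F$ is obtained by composing the two combination steps, it suffices to prove the closure statement for one generic convex combination $G(\w) = \sum_{m} \lambda_m G_m(\w)$ with $\lambda_m \ge 0$, $\sum_m \lambda_m = 1$, and each $G_m$ convex, $\rho$-Lipschitz, and $\beta$-smooth; applying this once gives the claim for $F^{k}$, and applying it a second time (with the $G_m$ now being the $F^{k}$) gives it for $F$.

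The three steps, carried out on the generic $G$, are routine. For convexity: for any $\w, \w'$ and $\alpha \in [0,1]$, linearity of the sum and convexity of each $G_m$ give $G(\alpha\w + (1-\alpha)\w') \le \sum_m \lambda_m[\alpha G_m(\w) + (1-\alpha)G_m(\w')] = \alpha G(\w) + (1-\alpha)G(\w')$. For the Lipschitz bound: by the triangle inequality and the hypothesis on each $G_m$,
\begin{equation*}
\norm{G(\w) - G(\w')} \le \sum_m \lambda_m \norm{G_m(\w) - G_m(\w')} \le \sum_m \lambda_m \rho\norm{\w - \w'} = \rho\norm{\w - \w'}.
\end{equation*}
For $\beta$-smoothness: since the gradient of a finite sum is the sum of gradients, $\g G(\w) = \sum_m \lambda_m \g G_m(\w)$, and then
\begin{equation*}
\norm{\g G(\w) - \g G(\w')} \le \sum_m \lambda_m \norm{\g G_m(\w) - \g G_m(\w')} \le \sum_m \lambda_m \beta\norm{\w - \w'} = \beta\norm{\w - \w'}.
\end{equation*}
In each line the convexity $\sum_m \lambda_m = 1$ is what collapses the weighted bound back to the original constant.

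The only point needing a word of care---and the closest thing to an obstacle---is the second application of the lemma to $F$: after establishing that each $F^{k}$ is convex, $\rho$-Lipschitz, and $\beta$-smooth with the \emph{same} constants $\rho, \beta$ as the $F^{k,i}$, one must observe that the outer weights $\frac{\abs{\gD^{k}}}{\abs{\gD}}$ are again nonnegative and sum to $1$ (because $\abs{\gD} = \sum_{k \in \gG}\abs{\gD^{k}}$ by the disjoint-partition hypothesis $\cup_{k}\gN^{k} = \gN$, $\gN^{k}\cap\gN^{l} = \emptyset$ in Definition~\ref{def:STAR-stars}), so the generic argument applies verbatim and the constants do not degrade. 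I would state the generic convex-combination fact once, prove the three bounds as above, and then invoke it twice; no calculation beyond the displayed triangle-inequality chains is required.
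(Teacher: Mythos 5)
Your proof is correct and takes the same approach the paper intends: the paper simply declares the lemma ``straightforward from Assumption~\ref{Assumption} and the definitions,'' and your write-up supplies exactly the standard convex-combination closure argument (applied once for $F^{k}$ and again for $F$) that this remark is pointing to. No discrepancy to report.
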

\begin{proof}
It is straightforward from Assumption~\ref{Assumption} and the definitions of $F$ and $F^{k}$ in Definition~\ref{def:STAR-stars}.
\end{proof}

We introduce two types of intervals depending on the learning level: a \emph{g\textbf{\underline{r}}oup interval}, $ \bkt{r} \triangleq \bkt{ \p{r-1}\uptau_1, r\uptau_1 } $, indicates an interval between two successive \emph{group} aggregations, and a \emph{g\textbf{\underline{l}}obal interval}, $ \bkt{l} \triangleq \bkt{ \p{l-1}\uptau_1\uptau_2, l\uptau_1\uptau_2 } $, indicates an interval between two successive \emph{global} aggregations.

Next, we introduce the notion of \emph{group-based virtual learning} in Definition~\ref{def:VirtualLearning}, where training data is assumed to exist on a \emph{virtual} central repository for each model. This notion is used to bridge the \emph{local-to-group divergence}\,(i.e., the divergence between a local model and a group model) in a group interval and the \emph{group-to-global divergence}\,(i.e., the divergence between a group model and a global model) in a global interval.

\begin{definition}[Group-Based Virtual Learning]
\label{def:VirtualLearning}
    Given a certain group membership $\z$, for any $k$, $\bkt{r}$, and $\bkt{l}$, the \emph{virtual group model} $ \v^{k}_{\bkt{r}} $ and \emph{virtual global model} $ \v_{\bkt{l}} $ are updated by performing gradient descent steps on the centralized data examples for $\mathcal{N}^{k}$ and $\mathcal{N}$, respectively, and synchronized with the federated group model $ \w^{k} $ and the global model $\w$ at the beginning of each interval, as in Eq.~\eqref{eq:v_central}.
    \begin{equation}
    \label{eq:v_central}
    \begin{aligned}
        \v^{k}_{\bkt{r},t} \triangleq \begin{dcases}
            \w^{k}_{t} & \normal{if} ~ t = \p{r-1}\uptau_1, \\
            \v^{k}_{\bkt{r},t-1} - \eta\g F^{k}\p{\v^{k}_{\bkt{r},t-1}} & \normal{otherwise}
        \end{dcases} \\
        \v_{\bkt{l},t} \triangleq \begin{dcases}
            \w_{t} & \normal{if} ~ t = \p{l-1}\uptau_1\uptau_2, \\
            \v_{\bkt{l},t-1} - \eta\g F\p{\v_{\bkt{l},t-1}} & \normal{otherwise}
        \end{dcases}
    \end{aligned}
    \end{equation}
\end{definition}

To facilitate the interpretation, Figure~\ref{fig:wv} shows how a virtual model $\v$ is updated, following Definition~\ref{def:VirtualLearning}. For example, $\v_{\left[ l \right]} $ starts diverging from $\w$ after $(l-1)\uptau_1\uptau_2$ and becomes synchronized with $\w$ at $l\uptau_1\uptau_2$.

\begin{figure}[t!]
    \centering
    \includegraphics[width=\columnwidth]{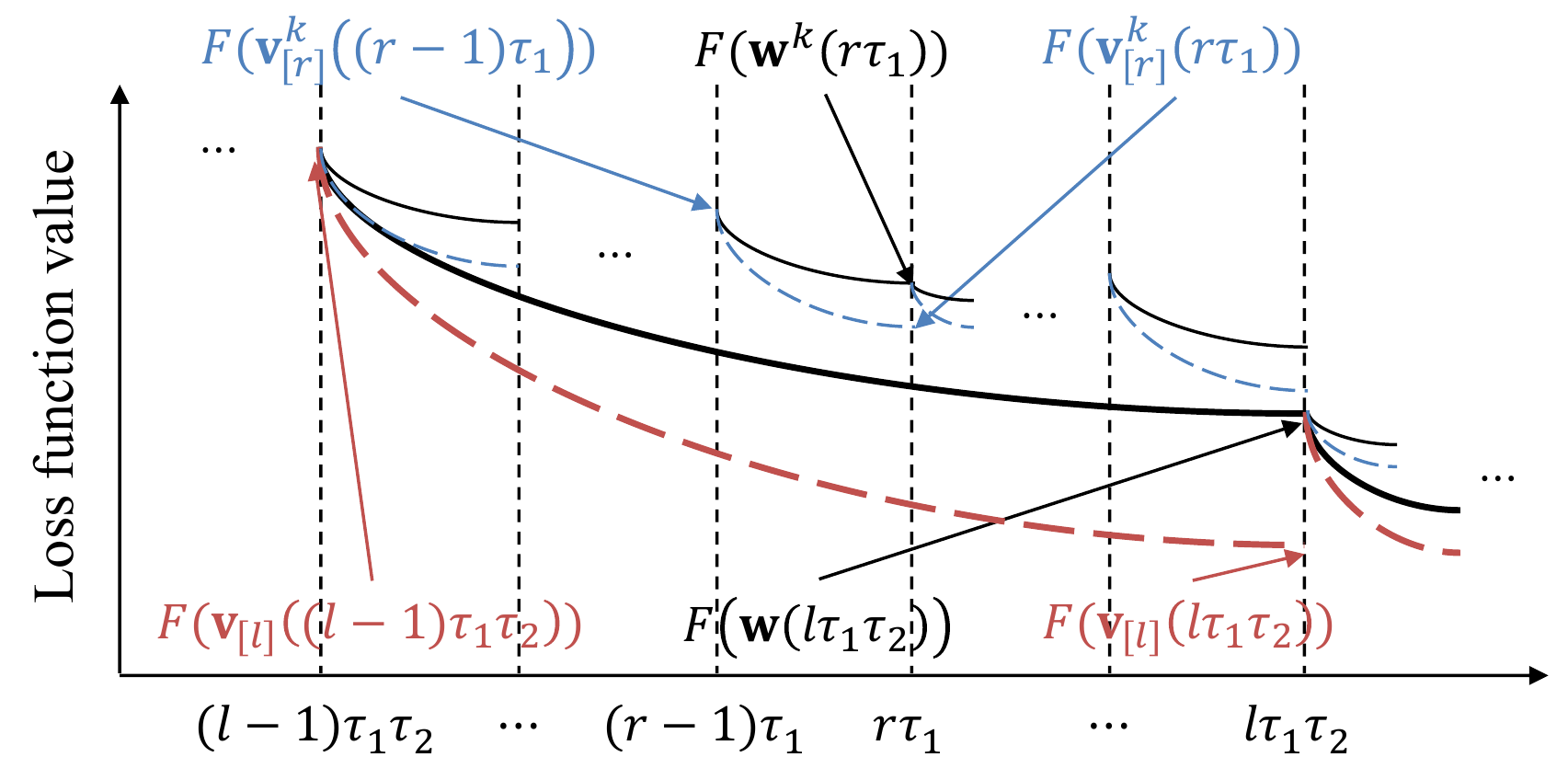}
    \caption{Illustration of loss divergence and synchronization between $ \w^{k} $ and $ \v^{k}_{\left[ r \right]} $ and between $\w$ and $ \v_{\left[ l \right]} $.}
    \label{fig:wv}
\end{figure}

Then, we formalize \emph{group-based gradient divergence} in Definition~\ref{def:GradientDivergence} that models the impact of the difference in data distributions across nodes on federated learning.
\begin{definition}[Group-Based Gradient Divergence]
\label{def:GradientDivergence}
    Given a certain group membership $\z$, for any $i$ and $k$, $ \delta^{k,i} $ is defined as the gradient difference between the $i$-th local loss and the $k$-th group loss; $ \Delta^{k} $ is defined as the gradient difference between the $k$-th group loss and the global loss, which can be expressed as Eq.~\eqref{eq:GradientDivergence}.
    \begin{equation}
    \begin{gathered}
    \label{eq:GradientDivergence}
        \delta^{k,i} \triangleq \max_{\w} \norm{ \g F^{k,i}\p{\w} - \g F^{k}\p{\w} }, \\
        \Delta^{k} \triangleq \max_{\w} \norm{ \g F^{k}\p{\w} - \g F\p{\w} }
    \end{gathered}
    \end{equation}
    Then, the \textit{local-to-group divergence} $\delta$ and the \textit{group-to-global divergence} $\Delta$ are formulated as Eq.~\eqref{eq:GradientDivergenceSum}.
    \begin{equation}
    \label{eq:GradientDivergenceSum}    
        \delta \triangleq \sum_{ k \in \gG }{ \sum_{ i \in \gN^{k} }{ \frac{ \abs{\gD^{k,i}} }{ \abs{\gD} } \delta^{k,i} } },~~  
        \Delta \triangleq \sum_{ k \in \gG }{ \frac{ \abs{\gD^{k}} }{ \abs{\gD} } \Delta^{k} }
    \end{equation}
\end{definition}

Based on Definition~\ref{def:VirtualLearning} and \ref{def:GradientDivergence}, we introduce an auxiliary lemma\,(Lemma~\ref{lemma:wi-vq}).

\begin{lemma}
\label{lemma:wi-vq}
For any $\bkt{r}$, $\bkt{l}$, and $ t \in \bkt{ \p{r-1}\uptau_1, r\uptau_1 } \subset \bkt{ \p{l-1}\uptau_1\uptau_2, l\uptau_1\uptau_2 } $, an upper bound of the norm of the difference between a local model and the virtual global model can be expressed as Eq.~\eqref{eq:node-virtual-global}.
\begin{equation}
\label{eq:node-virtual-global}
\begin{multlined}
    \norm{ \w^{k,i}_{t} - \v_{\bkt{l},t} } \le \frac{ \delta^{k,i} }{ \beta } \p{ \p{ \eta\beta + 1 }^{ t - \p{r-1}\uptau_1 } - 1 } \\
    + \frac{ \Delta^{k} }{ \beta } \p{ \p{ \eta\beta + 1 }^{ t - \p{l-1}\uptau_1\uptau_2 } - 1 }
\end{multlined}
\end{equation}

\end{lemma}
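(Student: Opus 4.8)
The plan is to bound the distance by inserting the virtual group model $\v^{k}_{\bkt{r}}$ as an intermediate reference and splitting with the triangle inequality,
\begin{equation*}
    \norm{ \w^{k,i}_{t} - \v_{\bkt{l},t} } \le \norm{ \w^{k,i}_{t} - \v^{k}_{\bkt{r},t} } + \norm{ \v^{k}_{\bkt{r},t} - \v_{\bkt{l},t} } .
\end{equation*}
It then suffices to prove that the first summand never exceeds $\frac{\delta^{k,i}}{\beta}\p{\p{\eta\beta+1}^{t-\p{r-1}\uptau_1}-1}$ and the second never exceeds $\frac{\Delta^{k}}{\beta}\p{\p{\eta\beta+1}^{t-\p{l-1}\uptau_1\uptau_2}-1}$, since adding these two bounds is exactly the right-hand side of Eq.~\eqref{eq:node-virtual-global}.

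For the first summand, I would use that throughout the group interval $\bkt{r}$ both iterates run plain gradient descent --- $\w^{k,i}$ on $F^{k,i}$ and $\v^{k}_{\bkt{r}}$ on $F^{k}$ --- and that by Definition~\ref{def:VirtualLearning} they coincide at $t=\p{r-1}\uptau_1$. Subtracting the two update rules and inserting $\pm\,\eta\g F^{k}\p{\w^{k,i}_{t-1}}$, I would bound the $\g F^{k}$ difference by the $\beta$-smoothness of $F^{k}$ (Lemma~\ref{lemma:Assumption}) and the leftover $\g F^{k,i}-\g F^{k}$ term by the definition of $\delta^{k,i}$ (Definition~\ref{def:GradientDivergence}), obtaining the one-step recursion $a_{t}\le\p{\eta\beta+1}\,a_{t-1}+\eta\delta^{k,i}$ for $a_{t}\triangleq\norm{\w^{k,i}_{t}-\v^{k}_{\bkt{r},t}}$ with $a_{\p{r-1}\uptau_1}=0$. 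Unrolling this and summing the geometric series $\sum_{j=0}^{m-1}\p{\eta\beta+1}^{j}=\frac{\p{\eta\beta+1}^{m}-1}{\eta\beta}$ with $m=t-\p{r-1}\uptau_1$ gives the claimed bound on $a_{t}$. The second summand is obtained by the identical argument one level up, comparing $\v^{k}_{\bkt{r}}$ (gradient descent on $F^{k}$) with $\v_{\bkt{l}}$ (gradient descent on $F$) and using the $\beta$-smoothness of $F$ together with the definition of $\Delta^{k}$.

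The step I expect to be the main obstacle is the synchronization bookkeeping that pins the two recursions to the two different interval scales. The reference $\v^{k}_{\bkt{r}}$ is re-anchored to the aggregated group model $\w^{k}$ --- the data-size-weighted average of the locally updated $\w^{k,i}$ --- at every group boundary, whereas $\v_{\bkt{l}}$ is re-anchored to the global model $\w$ only at global boundaries, so one cannot simply initialize the $\Delta^{k}$-recursion at zero over the whole global interval. I would handle this by an outer induction on $t$: within an interval the bound propagates by the one-step recursion above; at an internal group boundary it is re-established by pushing the data-size-weighted aggregation inside $\norm{\cdot}$ via convexity and feeding in the previous group interval's bound on the averaged local deviations, which keeps the $\delta^{k,i}$ exponent resetting at each group boundary while the $\Delta^{k}$ exponent keeps counting from the global boundary; and at a global boundary the right-hand side of Eq.~\eqref{eq:node-virtual-global} vanishes while $\w^{k,i}$, $\v^{k}_{\bkt{r}}$, and $\v_{\bkt{l}}$ are all synchronized, so the inequality is trivial. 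Verifying that the inductive hypothesis survives each of these three cases, and then adding the two summands, completes the proof.
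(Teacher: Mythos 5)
Your proposal reproduces the paper's proof of the first summand essentially verbatim: the same triangle-inequality split through the virtual group model $\v^{k}_{\bkt{r},t}$, the same one-step recursion $a_{t}\le\p{\eta\beta+1}a_{t-1}+\eta\delta^{k,i}$ (inserting $\pm\eta\g F^{k}\p{\w^{k,i}_{t-1}}$ instead of the paper's $\pm\eta\g F^{k,i}\p{\v^{k}_{\bkt{r},t-1}}$ makes no difference), and the same geometric unrolling from the group-boundary synchronization.

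Where you diverge is the second summand, Eq.~\eqref{eq:wvdist2}, which the paper disposes of with the single word ``analogously.'' You are right that this is the delicate step, and right about why: $\v^{k}_{\bkt{r}}$ is re-anchored to $\w^{k}$ at every group boundary, while $\v_{\bkt{l}}$ is re-anchored only at global boundaries, so the $\Delta^{k}$-recursion cannot be restarted from zero inside $\bkt{l}$. But your patch does not close the gap as sketched. At an internal group boundary $t=\p{r-1}\uptau_1$, pushing the weighted aggregation through the norm by convexity and invoking the inductive hypothesis from the end of the previous group interval gives
\begin{equation*}
\norm{\w^{k}_{t}-\v_{\bkt{l},t}}\le\sum_{i\in\gN^{k}}\frac{\abs{\gD^{k,i}}}{\abs{\gD^{k}}}\,\frac{\delta^{k,i}}{\beta}\p{\p{\eta\beta+1}^{\uptau_1}-1}+\frac{\Delta^{k}}{\beta}\p{\p{\eta\beta+1}^{t-\p{l-1}\uptau_1\uptau_2}-1},
\end{equation*}
and the first term does not vanish, whereas the bound you need in order to restart the next group interval --- with the $\delta^{k,i}$ exponent reset to zero and only the $\Delta^{k}$ term carried over --- leaves no room for it. This residual accumulates over the $\uptau_2$ group intervals inside $\bkt{l}$, so Eq.~\eqref{eq:node-virtual-global} is not recovered by the induction as described. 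To be fair, this is not a defect you introduced: the paper's ``analogously'' hides exactly the same problem. A complete argument must either carry the accumulated local-divergence term into the final bound (weakening the lemma) or supply a reason it can be dropped.
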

\begin{proof}
From the triangle inequality, one can simply derive Eq.~\eqref{eq:wvdistc}.
\begin{equation}
\begin{multlined}
    \norm{ \w^{k,i}_{t} - \v_{\bkt{l},t} } = \norm { \w^{k,i}_{t} - \v^{k}_{\bkt{r},t} + \v^{k}_{\bkt{r},t} - \v_{\bkt{l},t} } \\
    \le \norm{ \w^{k,i}_{t} - \v^{k}_{\bkt{r},t} } + \norm{ \v^{k}_{\bkt{r},t} - \v_{\bkt{l},t} } \label{eq:wvdistc}
\end{multlined}
\end{equation}
To conclude this proof, it thus suffices to show Eq.~\eqref{eq:wvdist1} and \eqref{eq:wvdist2}.
\begin{align}
    \norm{ \w^{k,i}_{t} - \v^{k}_{\bkt{r},t} } \le & \frac{ \delta^{k,i} }{ \beta } \p{ \p{ \eta\beta + 1 }^{ t - \p{r-1}\uptau_1 } - 1 }\quad \label{eq:wvdist1} \\
    \norm{ \v^{k}_{\bkt{r},t} - \v_{\bkt{l},t} } \le & \frac{ \Delta^{k} }{ \beta } \p{ \p{ \eta\beta + 1 }^{ t - \p{l-1}\uptau_1\uptau_2 } - 1 } \label{eq:wvdist2}
\end{align}
Then, by putting Eq.~\eqref{eq:wvdist1} and \eqref{eq:wvdist2} into Eq.~\eqref{eq:wvdistc}, we can confirm Lemma~\ref{lemma:wi-vq}.

Both Eq.~\eqref{eq:wvdist1} and \eqref{eq:wvdist2} can be easily drawn from the $\beta$-smooth property of $F^{k,i}$ and $F^{k}$. From Eq.~(4) and (6), we can derive Eq.~\eqref{eq:node-virtual-group}. 
\begin{equation}
\label{eq:node-virtual-group}
\begin{aligned}
    & \norm{ \w^{k,i}_{t} - \v^{k}_{\bkt{r},t} } \\
    & = \norm{ \w^{k,i}_{t-1} - \eta\g F^{k,i}\p{\w^{k,i}_{t-1}} - \v^{k}_{\bkt{r},t-1} + \eta\g F^{k}\p{\v^{k}_{\bkt{r},t-1}}} \\
    & \begin{multlined} \le \norm{ \w^{k,i}_{t-1} - \v^{k}_{\bkt{r},t-1} } \\
    + \eta\norm{ \g F^{k,i}\p{\w^{k,i}_{t-1}} - \g F^{k,i}\p{\v^{k}_{\bkt{r},t-1}} } \\
    + \eta\norm{ \g F^{k,i}\p{\v^{k}_{\bkt{r},t-1}} - \g F^{k}\p{\v^{k}_{\bkt{r},t-1}} } \end{multlined} \\
    & \le \p{ \eta\beta + 1 }\norm{ \w^{k,i}_{t-1} - \v^{k}_{\bkt{r},t-1} } + \eta\delta^{k,i}
\end{aligned}
\end{equation}

The last inequality stems from the $ \beta $-smoothness of $ F^{k,i} $ and Definition~\ref{def:GradientDivergence}.

Then, since $\w^{k,i}_{t} = \w^{k}_{t} = \v^{k}_{\bkt{r},t} $ at every group aggregation from Eq.~(4) and (6), Eq.~\eqref{eq:node-virtual-group} can be rewritten as Eq.~\eqref{eq:wi-delta}. 
\begin{align}
    \norm{ \w^{k,i}_{t} - \v^{k}_{\bkt{r},t} } \le & \eta\delta^{k,i}\sum^{ t - \p{r-1}\uptau_1 }_{ y = 1 }{ \p{ \eta\beta + 1 }^{ y - 1 } } \nonumber \\
    =& \frac{ \delta^{k,i} }{ \beta } \p{ \p{ \eta\beta + 1 }^{ t - \p{r-1}\uptau_1 } - 1 } \label{eq:wi-delta}
\end{align}
Analogously, one can derive Eq.~\eqref{eq:wvdist2}. This is the end of the proof of Lemma~\ref{lemma:wi-vq}.
\end{proof}

From Lemma~\ref{lemma:wi-vq} and Jensen's inequality, for all $t$ and $q$, we have Eq.~\eqref{eq:w-v}.
\begin{equation}
\label{eq:w-v}
\begin{multlined}
    \norm{ \w_{t} - \v_{\bkt{l},t} } \le \sum_{ k \in \gG } \sum_{ i \in \gN^{k} } \frac{ \abs{\gD^{k,i}} }{ \abs{\gD} } \norm{ \w^{k,i}_{t} - \v_{\bkt{l},t} } \\
    \le \frac{ \delta }{ \beta } \p{ \p{ \eta\beta + 1 }^{ \uptau_1 } - 1 } + \frac{ \Delta }{ \beta } \p{ \p{ \eta\beta + 1 }^{ \uptau_1\uptau_2 } - 1 }
\end{multlined}
\end{equation}
Finally, for \textit{STAR-stars}, we derive the convergence bound between the federated global model and the virtual global model by Theorem~\ref{def:Conv-STAR-stars}.

\begin{theorem}[Convergence Bound of \textit{STAR-stars}]
\label{def:Conv-STAR-stars}
For any global interval $\bkt{l}$ and $ t \in \bkt{l} $, if $ F^{k,i} $ is $ \beta $-smooth for every $i$ and $k$ in Eq.~\eqref{eq:GradientDivergence}, then Eq.~\eqref{eq:Conv-STAR-stars} holds.
\begin{equation}
\label{eq:Conv-STAR-stars}
\begin{split}
    F\p{\w_{t}} - F\p{\v_{\bkt{l},t}} \le \frac{ \rho }{ \beta }\p{ \delta h\p{\uptau_1} + \Delta h\p{\uptau_1\uptau_2} } \\
    \normal{where} ~ h\p{t} \triangleq \p{ \eta\beta + 1 }^{t} - 1
\end{split}
\end{equation}
\end{theorem}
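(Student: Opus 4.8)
The plan is to obtain the theorem as a short corollary of two facts already in hand: the $\rho$-Lipschitz continuity of the global loss $F$, and the parameter-level bound on $\norm{\w_t - \v_{\bkt{l},t}}$ recorded in Eq.~\eqref{eq:w-v}. First I would apply Lemma~\ref{lemma:Assumption}, which asserts that $F$ is $\rho$-Lipschitz, to the pair $\w_t$ and $\v_{\bkt{l},t}$; this immediately gives
\[
    F\p{\w_t} - F\p{\v_{\bkt{l},t}} \le \norm{ F\p{\w_t} - F\p{\v_{\bkt{l},t}} } \le \rho \norm{ \w_t - \v_{\bkt{l},t} },
\]
so the loss gap is controlled by the model gap. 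Note that the one-sided inequality in the statement is weaker than the two-sided Lipschitz bound, so this step is harmless even though the sign of $F\p{\w_t} - F\p{\v_{\bkt{l},t}}$ is not fixed.

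Second I would substitute the model-gap bound. For $t$ in the global interval $\bkt{l}$, Lemma~\ref{lemma:wi-vq} bounds each $\norm{ \w^{k,i}_t - \v_{\bkt{l},t} }$ by $\frac{\delta^{k,i}}{\beta}\p{\p{\eta\beta+1}^{t-\p{r-1}\uptau_1}-1} + \frac{\Delta^{k}}{\beta}\p{\p{\eta\beta+1}^{t-\p{l-1}\uptau_1\uptau_2}-1}$; expressing $\w_t$ as the data-weighted average $\sum_{k}\sum_{i}\frac{\abs{\gD^{k,i}}}{\abs{\gD}}\w^{k,i}_t$ from the aggregation rule, convexity of the norm (Jensen's inequality) moves the average outside the norm, the exponents are capped using $t-\p{r-1}\uptau_1 \le \uptau_1$ and $t-\p{l-1}\uptau_1\uptau_2 \le \uptau_1\uptau_2$ (valid since $\bkt{r} \subset \bkt{l}$ and $t \in \bkt{l}$) together with monotonicity of $s \mapsto \p{\eta\beta+1}^s$, and the weighted divergence sums collapse to $\delta$ and $\Delta$ per Eq.~\eqref{eq:GradientDivergenceSum}. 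This is exactly Eq.~\eqref{eq:w-v}, namely $\norm{ \w_t - \v_{\bkt{l},t} } \le \frac{1}{\beta}\p{ \delta h\p{\uptau_1} + \Delta h\p{\uptau_1\uptau_2} }$ with $h\p{t} = \p{\eta\beta+1}^t - 1$.

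Finally I would chain the two bounds to conclude $F\p{\w_t} - F\p{\v_{\bkt{l},t}} \le \frac{\rho}{\beta}\p{ \delta h\p{\uptau_1} + \Delta h\p{\uptau_1\uptau_2} }$, which is Eq.~\eqref{eq:Conv-STAR-stars}. I do not expect a substantive obstacle here: the genuinely analytical work — setting up the recursion $\norm{\w^{k,i}_t - \v^{k}_{\bkt{r},t}} \le \p{\eta\beta+1}\norm{\w^{k,i}_{t-1} - \v^{k}_{\bkt{r},t-1}} + \eta\delta^{k,i}$ from $\beta$-smoothness and unrolling it into a geometric sum — has already been carried out in Lemma~\ref{lemma:wi-vq} and folded into Eq.~\eqref{eq:w-v}. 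The only points requiring care are the two bookkeeping items above (relaxing to the one-sided Lipschitz estimate, and capping the exponents at $\uptau_1$ and $\uptau_1\uptau_2$ via the interval containment $\bkt{r}\subset\bkt{l}$), and it is precisely this capping that gives the $h(\cdot)$ terms their form.
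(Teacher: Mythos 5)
Your proposal is correct and follows essentially the same route as the paper: the appendix derives Eq.~\eqref{eq:w-v} from Lemma~\ref{lemma:wi-vq} via Jensen's inequality and then obtains Theorem~\ref{def:Conv-STAR-stars} by composing it with the $\rho$-Lipschitz property of $F$ from Lemma~\ref{lemma:Assumption}, exactly as you describe. Your two bookkeeping remarks (the one-sided relaxation of the Lipschitz bound and capping the exponents at $\uptau_1$ and $\uptau_1\uptau_2$) are the same steps the paper uses implicitly when passing from Lemma~\ref{lemma:wi-vq} to Eq.~\eqref{eq:w-v}.
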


\subsection{Convergence Analysis for the Other Architectures}

In this section we analyze convergence bounds for flat architectures\,(\textit{STAR} and \textit{RING}), consensus group architectures\,(\textit{STAR-rings}, \textit{RING-stars}, and \textit{RING-rings}, and pluralistic group architectures\,(\textit{stars} and \textit{rings}). First, the convergence bound of \textit{STAR} is the same as that of \textit{STAR-stars} with no group $\abs{\gG}=1$ and no group communication $\uptau_2=1$, as in Eq.~\eqref{eq:Conv-STAR}. By excluding the notion of groups, it also means that the group-to-global divergence $\Delta$ becomes $0$ and thus the local-to-group divergence $\delta$ becomes the local-to-global divergence $D$, which is extended from Eq.~\eqref{eq:GradientDivergence} and \eqref{eq:GradientDivergenceSum}.

\begin{equation}
\label{eq:Conv-STAR}
\begin{aligned}
F\p{\w_{t}} - F\p{\v_{\bkt{l},t}} & \le \frac{ \rho }{ \beta } D h\p{\uptau} \\
\normal{where} & ~ D \triangleq \sum_{ i \in \gN }{ \frac{ \abs{\gD^{i}} }{ \abs{\gD} } D^{i} } \\
\normal{and} & ~ D^{i} \triangleq \max_{\w} \norm{ \g F^{i}\p{\w} - \g F\p{\w} }
\end{aligned}
\end{equation}

Next, the convergence bound of \textit{stars} is the same as multiple independent \textit{STAR} groups, where the total node size of \textit{stars} is the sum of nodes of each \textit{STAR} group. Thus, by regarding the local-to-global divergence $D$ from Eq.~\eqref{eq:Conv-STAR} of each \textit{STAR} group as the local-to-group divergence $\delta^{k}$ of each group in \textit{stars}, we have Eq.~\eqref{eq:Conv-stars} for \textit{stars}.

\begin{equation}
\label{eq:Conv-stars}
F\p{\w_{t}} - F\p{\v_{\bkt{l},t}} \le \frac{ \rho }{ \beta } \delta h\p{\uptau}
\end{equation}

Next, for \textit{STAR-rings}, similar to Theorem~\ref{thm:RING}, each \textit{ring}-based learning in a group is an unbiased estimator of the centralized learning within the group because of Eq.~\eqref{eq:STAR-rings-CL}.
\begin{equation}
\label{eq:STAR-rings-CL}
    \gE\bkt{\g F^{k,i}\p{\w_{t}}} = \sum_{i \in \gN^{k}}{\frac{\abs{\gD^{k,i}}}{\abs{\gD^{k}}}\g F^{k,i}\p{\w_{t}}} = \g F^{k}\p{\w_{t}}
\end{equation}
Then, by approximating $\g F^{k,i}\p{\w_{t}}$ of Eq.~\eqref{eq:GradientDivergence} to $\gE\bkt{\g F^{k,i}\p{\w_{t}}}$ of Eq.~\eqref{eq:STAR-rings-CL}, the local-to-group divergence $\delta$ becomes $0$ and thus the group-to-global divergence $\Delta$ becomes the local-to-global divergence $D$. Thus, Eq.~\eqref{eq:Conv-STAR-stars} is extended to Eq.~\eqref{eq:Conv-STAR-rings} for \textit{STAR-rings}.
\begin{equation}
\label{eq:Conv-STAR-rings}
F\p{\w_{t}} - F\p{\v_{\bkt{l},t}} \le \frac{ \rho }{ \beta } D h\p{\uptau_1\uptau_2}
\end{equation}

Next, for \textit{RING-stars}, similar to Theorem~\ref{thm:RING}, the global \textit{RING}-based learning is an unbiased estimator of the globally centralized learning in consideration of each \textit{stars}-based learning in a group because of Eq.~\eqref{eq:RING-stars-CL}.
\begin{equation}
\label{eq:RING-stars-CL}
    \gE\bkt{\g F^{k}\p{\w_{t}}} = \sum_{k \in \gG}{\frac{\abs{\gD^{k}}}{\abs{\gD}}\g F^{k}\p{\w_{t}}} = \g F\p{\w_{t}}
\end{equation}
Then, by approximating $\g F^{k}\p{\w_{t}}$ of Eq.~\eqref{eq:GradientDivergence} to $\gE\bkt{\g F^{k}\p{\w_{t}}}$ of Eq.~\eqref{eq:RING-stars-CL}, the group-to-global divergence $\Delta$ becomes $0$ and thus the local-to-group divergence $\delta$ becomes the local-to-global divergence $D$. Thus, Eq.~\eqref{eq:Conv-STAR-stars} is extended to Eq.~\eqref{eq:Conv-RING-stars} for \textit{RING-stars}.
\begin{equation}
\label{eq:Conv-RING-stars}
F\p{\w_{t}} - F\p{\v_{\bkt{l},t}} \le \frac{ \rho }{ \beta } D h\p{\uptau_1}
\end{equation}

Analogously, from Theorem~\ref{thm:RING}, \textit{RING}, \textit{Ring-rings}, and \textit{rings} can be easily shown to have the approximate convergence bound of $0$.

\section{Communication Scalability}
\label{sec:CommScalability}

In this section, we provide an approach to measure communication scalability. For the analysis, $M$ denotes the model size; and given the total learning steps $T$, $\uptau_{f} \triangleq T/\uptau$ is defined as the number of global communications for the flat architectures; $\uptau_{c} \triangleq T/\uptau_1\uptau_2$ is defined as the number of global communications for the consensus group architectures; $\uptau_{p} \triangleq T/\uptau$ is defined as the number of group communications for the pluralistic group architectures.

\textit{STAR} sends $M \abs{\gN} \uptau_{f}$ of total global aggregation data; \textit{RING} sends $M \uptau_{f}$ of total global inter-node transfer because only one node is active for the diurnal property; \textit{STAR-stars} sends $M \abs{\gN} \p{\uptau_2-1}\uptau_{c}$ of total group aggregation data and $M \abs{\gN} \uptau_{c}$ of total global aggregation data, thus $M \abs{\gN} \uptau_2\uptau_{c}$ of total communication data, which is the same cost as \textit{STAR} in case of $\uptau_{f}=\uptau_2\uptau_{c}$ as suggested by \citet{HierFAVG}; \textit{stars} sends $M \abs{\gN} \uptau_{p}$ of total group aggregation data, which is the same cost as \textit{STAR} because $\uptau_{p}=\uptau_{f}$ from the definition; analogously, one can show the total communication data size for the rest of architectures, as summarized in Table~\ref{tbl:Architecture}.

\section{Deferred Proofs}
\label{sec:Proof}

\setcounter{theorem}{0}

\begin{theorem}
\textit{RING} is an unbiased estimator of the centralized learning that learns a centralized model by assuming the federated datasets to be located at a centralized storage.
\end{theorem}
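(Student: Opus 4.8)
The plan is to show that the update direction used at every step of \textit{RING} is, in expectation over the random ring, exactly the gradient of the centralized objective $F$ on the pooled dataset $\gD$, so that a single \textit{RING} step coincides in expectation with a centralized gradient-descent step.

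First I would fix the target: centralized learning keeps all of $\gD = \cup_{i\in\gN}\gD^{i}$ in one place and runs gradient descent on $F$, i.e.\ the centralized model evolves by $\w_{t} \triangleq \w_{t-1} - \eta\g F\p{\w_{t-1}}$ with $\g F\p{\w} = \sum_{i\in\gN}\frac{\abs{\gD^{i}}}{\abs{\gD}}\g F^{i}\p{\w}$ by Eq.~\eqref{eq:Problem}. Next I would read off the two cases of Eq.~\eqref{eq:w_RING}: during a turn the active node $i_{j}$ with $j=\floor{t/\uptau}$ performs $\w^{i_{j}}_{t} = \w^{i_{j}}_{t-1} - \eta\g F^{i_{j}}\p{\w^{i_{j}}_{t-1}}$, while the global inter-node transfer $\w^{i_{j}}_{t} \triangleq \w^{i_{j-1}}_{t}$ merely relabels which node holds the current model and leaves its value unchanged, so it contributes nothing to the estimator.

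Then, modeling the ring as a random permutation of $\gN$ in which the node occupying any given turn appears with probability proportional to its data mass $\abs{\gD^{i}}/\abs{\gD}$, I would take the expectation over the ring and obtain $\gE\bkt{\g F^{i_{j}}\p{\w}} = \sum_{i\in\gN}\frac{\abs{\gD^{i}}}{\abs{\gD}}\g F^{i}\p{\w} = \g F\p{\w}$. Hence every local step of \textit{RING} equals a centralized step in expectation, and propagating this step by step yields that the \textit{RING} iterate is an unbiased estimate of the centralized iterate, which is the assertion. The same computation at the group level is recorded later as Eqs.~\eqref{eq:STAR-rings-CL} and \eqref{eq:RING-stars-CL}.

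The one delicate point — the main obstacle — is the scope of the word ``unbiased''. The identity $\gE\bkt{\g F^{i_{j}}\p{\w}} = \g F\p{\w}$ is exact for a fixed $\w$, but because the gradient map is nonlinear, the expectation of the iterate after several turns matches the centralized iterate only in the linearized/first-order sense, which is precisely why the corresponding entries of Table~\ref{tbl:Architecture} carry the ``Approximate'' qualifier; I would state explicitly that the theorem asserts per-step (equivalently, small-$\uptau$) unbiasedness rather than an exact equality of trajectories. A secondary item to pin down is the randomization of the ring that produces the data-proportional weights: a uniform random permutation instead yields $\tfrac{1}{\abs{\gN}}\sum_{i\in\gN}\g F^{i}$, so one must either assume balanced shards or define the ring so that node $i$ receives a share of turns proportional to $\abs{\gD^{i}}$.
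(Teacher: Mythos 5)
Your proof takes essentially the same route as the paper's: both reduce the claim to the single identity $\gE\bkt{\g F^{i}\p{\w}} = \sum_{i\in\gN}\frac{\abs{\gD^{i}}}{\abs{\gD}}\g F^{i}\p{\w} = \g F\p{\w}$ applied to the active node's local step, with the inter-node transfer treated as a value-preserving relabeling (the paper phrases this as redefining \textit{RING} so that the data moves while the model stays on one node). Your two caveats --- that unbiasedness is a per-step, first-order statement rather than an exact equality of trajectories, and that the data-proportional weights in the expectation require either balanced shards or turns allocated proportionally to $\abs{\gD^{i}}$ --- are correct and in fact more careful than the paper, which asserts the weighted expectation over $i$ without justifying that distribution.
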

\begin{proof}
The centralized learning is defined as Eq.~\eqref{eq:w_CL}.
\begin{equation}
\label{eq:w_CL}
    \w_{t} = \w_{t-1} - \eta\g F\p{\w_{t-1}}
\end{equation}
Next, for \textit{RING} update, we regard the model and data communication relationship as the opposite, that is, instead of \textit{RING} transferring a local model from one node to another while data stays in place, \textit{RING} is redefined as switching data from one node to another while the local model stays in a certain node. Thus, at the time $t$ of data transfer from node $i$ to the certain node, Eq.~\eqref{eq:w_RING} changes to Eq.~\eqref{eq:redefined_w_RING}. Note that the index for the certain node is not denoted because it does not need be distinguished from the others.
\begin{equation}
\label{eq:redefined_w_RING}
    \w_{t} = \w_{t-1} - \eta\g F^{i}\p{\w_{t-1}}
\end{equation}
Thus, Eq.~\eqref{eq:redefined_w_RING} equals the centralized learning in expectation because of Eq.~\eqref{eq:RING-CL}
\begin{equation}
\label{eq:RING-CL}
    \gE\bkt{\g F^{i}\p{\w_{t-1}}} = \sum_{i \in \gN}{\frac{\abs{\gD^{i}}}{\abs{\gD}}\g F^{i}\p{\w_{t-1}}} = \g F\p{\w_{t-1}}
\end{equation}
\end{proof}

\begin{theorem}
The convergence bound O($\delta h(\uptau)$) of \textit{stars} doesn't necessarily be better than O($D h(\uptau)$) of \textit{STAR}.
\end{theorem}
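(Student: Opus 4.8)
The plan is to observe that the two displayed bounds, read as explicit expressions, share the common positive factor $\tfrac{\rho}{\beta}h(\uptau)$ --- the parameters $\uptau,\eta,\beta,\rho$ are the same for both architectures and $h(\uptau)=(\eta\beta+1)^{\uptau}-1>0$ for $\uptau\ge 1$ --- so ``$O(\delta h(\uptau))$ better than $O(D h(\uptau))$'' reduces to the single inequality $\delta<D$. It therefore suffices to construct one instance --- local losses satisfying Assumption~\ref{Assumption} together with a partition $\set{\gN^{k}}$ --- for which $\delta\ge D$. I would build it so that the divergence vectors $c_{i}\triangleq\g F^{i}-\g F$ are \emph{constant} in $\w$; then, since regrouping leaves $F^{k,i}=F^{i}$ and since $\g F^{i}-\g F^{k}=c_{i}-\bar c_{k}$ with $\bar c_{k}\triangleq\sum_{j\in\gN^{k}}\tfrac{\abs{\gD^{k,j}}}{\abs{\gD^{k}}}c_{j}$ is also constant, the maxima in Eq.~\eqref{eq:GradientDivergence} collapse to $D^{i}=\norm{c_{i}}$ and $\delta^{k,i}=\norm{c_{i}-\bar c_{k}}$, with $\sum_{i\in\gN}\tfrac{\abs{\gD^{i}}}{\abs{\gD}}c_{i}=0$, and the claim becomes a purely combinatorial question about a zero-mean weighted family of vectors: the weighted sum of the deviations of the $c_{i}$ from their group means need not be smaller than that from $0$.

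As the equality case, take two node types with $c_{i}\in\set{v,-v}$ (equal total weight, so the mean is $0$) and a \emph{balanced} grouping in which every group holds one node of each type; then $\bar c_{k}=0$ for every $k$, hence $\delta^{k,i}=\norm{c_{i}}=D^{i}$, and summing with the weights of Eq.~\eqref{eq:GradientDivergenceSum} over the partition gives $\delta=\sum_{i\in\gN}\tfrac{\abs{\gD^{i}}}{\abs{\gD}}D^{i}=D$. The two bounds coincide, which already proves the theorem: \textit{stars} need not improve on \textit{STAR}.

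To strengthen this to a strict loss $\delta>D$ --- \textit{stars} genuinely worse --- I would pool one low-weight ``outlier'' with many ``average'' nodes. Take $2m$ average nodes with $c_{i}=0$ and total weight $1-2\epsilon$, plus two outliers of weight $\epsilon$ with $c=\pm v$, so $D=2\epsilon\norm{v}$; split into two groups, each holding one outlier and $m$ average nodes, so each group has weight $\tfrac12$ and group mean $\bar c_{k}=\pm 2\epsilon v$. Then inside the outlier's own group the outlier is at distance $\norm{v-2\epsilon v}=(1-2\epsilon)\norm{v}$ and each of the $m$ average nodes is at distance $\norm{2\epsilon v}=2\epsilon\norm{v}$ from $\bar c_{k}$, whence $\delta=4(1-2\epsilon)\epsilon\norm{v}$ and $\delta/D=2(1-2\epsilon)>1$ for $\epsilon<\tfrac14$ --- e.g. $m=50$, $\epsilon=0.05$ give $D=0.1\norm{v}$ but $\delta=0.18\norm{v}$, nearly twice as large.

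The main obstacle is making the construction honestly satisfy Assumption~\ref{Assumption}, since a literal constant-gradient loss is convex and $\beta$-smooth but not globally $\rho$-Lipschitz. I would realize the prescribed $c_{i}$ with ``shared potential plus linear tilt'' losses $F^{i}(\w)=\phi(\w)+\ang{c_{i},\w}$, where $\phi$ is a Huber-type potential ($\beta$-smooth, convex, gradient bounded by some $\rho_{0}$) and the perturbations are small, $\norm{c_{i}}<\rho-\rho_{0}$. Each $F^{i}$ is then convex, $\beta$-smooth, $\rho$-Lipschitz, has a finite infimum, and satisfies $\g F^{i}-\g F\equiv c_{i}$ and $\g F^{i}-\g F^{k}\equiv c_{i}-\bar c_{k}$, so every maximum in Eq.~\eqref{eq:GradientDivergence} is attained with the stated value regardless of the domain. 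Checking that the Huber saturation does not perturb these (already constant) gradient differences on the region visited during training is the only step needing attention; everything else is bookkeeping with the definitions of Eq.~\eqref{eq:GradientDivergence}--\eqref{eq:GradientDivergenceSum} and the bounds of Eq.~\eqref{eq:Conv-STAR-stars}--\eqref{eq:Conv-stars}.
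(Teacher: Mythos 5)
Your proposal is correct, but it proves the theorem by a genuinely different route than the paper. The paper's proof never constructs an instance: it applies the triangle inequality in both directions to obtain the pair of bounds $\delta \le \Delta + D$ \eqref{eq:delta-sum-bound} and $D \le \delta + \Delta$ \eqref{eq:D-sum-bound}, and then asserts that ``the worst case of $\delta$ equals $\Delta + D$,'' which is larger than $D$. Strictly speaking, an upper bound on $\delta$ does not by itself certify that $\delta$ can attain (or even exceed) $D$, so the paper's argument really only shows that the triangle inequality cannot force $\delta \le D$; the existence claim is left implicit. You instead reduce the comparison of the two bounds to the scalar comparison $\delta$ vs.\ $D$ (legitimate, since both bounds share the positive factor $\tfrac{\rho}{\beta}h(\uptau)$) and then exhibit explicit witnesses: constant divergence vectors $c_i = \g F^i - \g F$ realized by convex, $\beta$-smooth, $\rho$-Lipschitz losses of the form $\phi(\w) + \ang{c_i,\w}$, for which the maxima in Eq.~\eqref{eq:GradientDivergence} collapse to $D^i = \norm{c_i}$ and $\delta^{k,i} = \norm{c_i - \bar c_k}$; your balanced $\pm v$ grouping gives $\delta = D$ and your outlier construction gives $\delta = 2(1-2\epsilon)D > D$ for $\epsilon < \tfrac14$ (the arithmetic checks out, including the weighting by $\abs{\gD^{k,i}}/\abs{\gD}$ in Eq.~\eqref{eq:GradientDivergenceSum}). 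What the paper's approach buys is brevity and the symmetric relation between $\delta$, $\Delta$, and $D$, which is independently informative; what yours buys is an actual proof of the existential statement, at the cost of the extra bookkeeping needed to make the construction satisfy Assumption~\ref{Assumption} (your Huber-potential device handles the Lipschitz requirement, and the only remaining cosmetic gap is that the paper's $F^i$ are nominally empirical averages of a pointwise loss $l$, which your abstract losses would need to be recast as).
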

\begin{proof}
From Eq.~\eqref{eq:GradientDivergence} and triangle inequality, we have Eq.~\eqref{eq:delta-bound}.
\begin{equation}
\label{eq:delta-bound}
\begin{aligned}
    & \delta^{k,i} = \max_{\w} \norm{ \g F^{k,i}\p{\w} - \g F^{k}\p{\w} } \\
    & = \max_{\w} \norm{ \g F^{k,i}\p{\w} - \g F\p{\w} + \g F\p{\w} - \g F^{k}\p{\w} } \\
    & \le \max_{\w} \bkt{ \norm{ \g F^{k,i}\p{\w} - \g F\p{\w} } + \norm{ \g F\p{\w} - \g F^{k}\p{\w} } }
\end{aligned}
\end{equation}
By summing Eq.~\eqref{eq:delta-bound} for all $i$ and $k$ and considering Eq.~\eqref{eq:GradientDivergenceSum} and \eqref{eq:Conv-STAR}, we have Eq.~\eqref{eq:delta-sum-bound}.
\begin{equation}
\label{eq:delta-sum-bound}
    \delta \le \Delta + D
\end{equation}

Similarly, from Eq.~\eqref{eq:Conv-STAR} and triangle inequality, we have Eq.~\eqref{eq:D-bound}.
\begin{equation}
\label{eq:D-bound}
\begin{aligned}
    & D^{i} = \max_{\w} \norm{ \g F^{i}\p{\w} - \g F\p{\w} } \\
    & = \max_{\w} \norm{ \g F^{i}\p{\w} - \g F^{k}\p{\w} + \g F^{k}\p{\w} - \g F\p{\w} } \\
    & \le \max_{\w} \bkt{ \norm{ \g F^{i}\p{\w} - \g F^{k}\p{\w} } + \norm{ \g F^{k}\p{\w} - \g F\p{\w} } }
\end{aligned}
\end{equation}
By summing Eq.~\eqref{eq:D-bound} for all $i$ and considering Eq.~\eqref{eq:GradientDivergenceSum}, we have Eq.~\eqref{eq:D-sum-bound}.
\begin{equation}
\label{eq:D-sum-bound}
    D \le \delta + \Delta
\end{equation}
Lastly, based on Eq.~\eqref{eq:delta-sum-bound} and \eqref{eq:D-sum-bound}, we can infer that the worst case of $\delta$ equals $\Delta + D$ that is larger than $D$, in which \textit{STAR} achieves lower convergence bound than \textit{stars}.
\end{proof}

\begin{theorem}
\textit{RING} exhibits higher variance than the centralized learning\,(unbiased estimator of \textit{RING} from Theorem~\ref{thm:RING}).
\end{theorem}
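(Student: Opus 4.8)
The plan is to compare the two estimators at the level of a single update step, reusing the ``data-switching'' reformulation of \textit{RING} established in the proof of Theorem~\ref{thm:RING}. Recall that, under that reformulation, a \textit{RING} step reads $\w_{t} = \w_{t-1} - \eta\g F^{i}\p{\w_{t-1}}$, where the active node $i$ is drawn with probability $\abs{\gD^{i}}/\abs{\gD}$, whereas centralized learning performs the deterministic step $\w_{t} = \w_{t-1} - \eta\g F\p{\w_{t-1}}$. First I would note that centralized learning, being deterministic, has zero update variance, so it suffices to show that the \textit{RING} step has nonnegative (and generically positive) variance.

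Next I would compute that variance directly. Conditioned on $\w_{t-1}$, the only randomness is the choice of $i$, and by Eq.~\eqref{eq:RING-CL} the \textit{RING} step is unbiased; hence
\begin{equation*}
    \gE\bkt{ \norm{ \w_{t} - \p{\w_{t-1} - \eta\g F\p{\w_{t-1}}} }^2 } = \eta^2 \sum_{i \in \gN}{ \frac{\abs{\gD^{i}}}{\abs{\gD}} \norm{ \g F^{i}\p{\w_{t-1}} - \g F\p{\w_{t-1}} }^2 } \ge 0 .
\end{equation*}
This is a weighted sum of squared norms, so it is always at least as large as the (zero) variance of centralized learning, and it is strictly positive unless $\g F^{i}\p{\w_{t-1}} = \g F\p{\w_{t-1}}$ for every node $i$ carrying positive weight, i.e., unless the data are effectively IID. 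Since federated learning operates in the non-IID regime (this is exactly the divergence $D$ of Eq.~\eqref{eq:Conv-STAR} being nonzero, as one can relate $\norm{ \g F^{i}(\w) - \g F(\w) } \le D^{i}$), the inequality is strict, which gives the claim.

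The main obstacle I anticipate is not the computation above but pinning down what ``variance of the learning'' should mean: the clean statement is about the one-step stochastic gradient, but one may want the variance of the iterate $\w_{t}$ itself after several steps. I would handle this by arguing inductively that positive per-step variance, injected at every step, cannot be cancelled in expectation under the unbiasedness of Theorem~\ref{thm:RING}, so $\operatorname{Var}\bkt{\w_{t}}$ strictly dominates the zero variance of the centralized iterate; alternatively, one can simply take the per-step formulation as the definition, which is the lightest route and matches the way the paper uses this theorem (to motivate variance reduction). A secondary subtlety is that $\w_{t-1}$ is itself random under \textit{RING}, so the unconditional variance involves an expectation over the trajectory; taking total expectation and using the tower property reduces it to the conditional computation already done.
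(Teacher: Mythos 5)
Your argument is correct and rests on the same two ingredients as the paper's proof --- the data-switching reformulation $\w_{t} = \w_{t-1} - \eta\g F^{i}\p{\w_{t-1}}$ from Theorem~\ref{thm:RING} and the nonnegativity of the node-sampled gradient's variance --- but it packages the conclusion differently. You compute the mean-squared deviation of the \textit{RING} iterate from the centralized step directly, obtaining $\eta^2\gE_{i}\norm{\g F^{i}\p{\w_{t-1}} - \g F\p{\w_{t-1}}}^2$, and compare it against the zero variance of the deterministic centralized update. The paper instead threads the same quantity through the $\beta$-smoothness descent inequality: it bounds $F\p{\w_{t}}$ for both updates and shows that, in expectation over $i$, \textit{RING}'s bound carries the additional term $\frac{\eta^2\beta}{2}\bkt{\gE_{i}\norm{\g F^{i}\p{\w_{t-1}}}^2 - \norm{\gE_{i}\g F^{i}\p{\w_{t-1}}}^2}$, which is exactly $\frac{\beta}{2}$ times your quantity. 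Your route is more elementary and makes ``variance'' literal (second moment of the iterate's deviation); the paper's route buys a direct link between that variance and a degraded per-step loss-decrease guarantee, which is what motivates the variance-reduction reformulation in the main text. One small caveat in your strictness argument: $D \ne 0$ only guarantees $\sup_{\w}\norm{\g F^{i}\p{\w} - \g F\p{\w}} > 0$ for some $i$, not that the gradients differ at the particular iterate $\w_{t-1}$; this is harmless here since the paper itself only establishes that the added term is nonnegative, but you should phrase the strict inequality as holding generically rather than at every step.
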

\begin{proof}
First, from the $\beta$-smoothness of $F$ and Eq.~\eqref{eq:w_CL}, we have Eq.~\eqref{eq:bound_w_CL} for centralized learning.

\begin{equation}
\begin{aligned}
\label{eq:bound_w_CL}
F\p{\w_{t}} & = F\p{\w_{t-1} - \eta\g F\p{\w_{t-1}}} \\
& \le F\p{\w_{t-1}} - \eta\p{1-\frac{\eta\beta}{2}}\norm{\g F\p{\w_{t-1}}}^2
\end{aligned}
\end{equation}

Next, from the $\beta$-smoothness of $F$ and the definition of \textit{RING} update in Eq.~\eqref{eq:redefined_w_RING}, we have Eq.~\eqref{eq:bound_w_RING} for \textit{RING}.

\begin{equation}
\begin{aligned}
\label{eq:bound_w_RING}
    F\p{\w_{t}} & = F\p{\w_{t-1} - \eta\g F^{i}\p{\w_{t-1}}} \\
    & \begin{multlined} \le F\p{\w_{t-1}} + \g F\p{\w_{t-1}}\p{-\eta\g F^{i}\p{\w_{t-1}}} \\
    + \frac{\eta^2\beta}{2}\norm{\g F^{i}\p{\w_{t-1}}} \end{multlined}
\end{aligned}
\end{equation}

In expectation with regard to $i$, we have Eq.~\eqref{eq:bound_w_RING_2}.
\begin{equation}
\begin{aligned}
\label{eq:bound_w_RING_2}
F\p{\w_{t}} & \begin{multlined}[t] \le F\p{\w_{t-1}} - \eta\norm{\g F\p{\w_{t-1}}}^2 \\
+ \frac{\eta^2\beta}{2}\gE_{i}{\norm{\g F^{i}\p{\w_{t-1}}}} \end{multlined} \\
& \begin{multlined} \le F\p{\w_{t-1}} - \eta\p{1-\frac{\eta\beta}{2}}\norm{\g F\p{\w_{t-1}}}^2 \\
+ \frac{\eta^2\beta}{2}\bkt{\gE_{i}\norm{\g F^{i}\p{\w_{t-1}}} - \norm{\gE_{i}\g F^{i}\p{\w_{t-1}}}^2} \end{multlined}
\end{aligned}
\end{equation}
%
where $\gE_{i}\norm{\g F^{i}\p{\w_{t-1}}} - \norm{\gE_{i}\g F^{i}\p{\w_{t-1}}}^2$ is the learning variance of \textit{RING}, which is an added term from Eq.~\eqref{eq:bound_w_CL}.

Analogously, one can show similar variances for \textit{STAR-rings}, \textit{RING-stars}, \textit{Ring-rings}, and \textit{rings}.
\end{proof}

\section{TornadoAggregate Details}
\label{sec:AlgorithmDetail}

\begin{algorithm}[t!]
\SetAlgoLined
\SetKwInOut{Input}{\textsc{Input}}
\SetKwInOut{Output}{\textsc{Output}}
\caption{Tornadoes\,(STAR-rings)}
\label{alg:Tornadoes}
\Input{$\gN$, $\abs{\gG}$, $C$, $\uptau_1$, $\uptau_2$}
\Output{$\w_{T}$}
Initialize $\set{\w^{k,i}_{0}}_{i \in \gN}$ to a random model $\w_{0}$ \\
Initialize a random ring $\bkt{i_{j} \in \gN | j \in \mathbb{N}^0, i_{j+\abs{\gN}} = i_{j}}$ \\
$\set{\gN^{k}}_{k \in \gG} \leftarrow \textsc{Cluster}\p{\gN}$ \tcp{Algorithm \ref{alg:Grouping}}
\For{$t \leftarrow 0, \cdots, T-1$}{
    \For{\normal{\textbf{each}} $k \in \gG$ \normal{\textbf{in parallel}}}{
        \For{\normal{\textbf{each}} $c \leftarrow 0, \cdots, C-1$ \normal{\textbf{in parallel}}}{
            $j \leftarrow \floor{t / \uptau_1}$, $i \leftarrow \p{i_{j}+c} \normal{ mod } \abs{\gN^{k}}$ \\
            $\w^{k,i}_{t+1} \leftarrow \w^{k,i}_{t} - \eta\g F^{k,i}\p{\w^{k,i}_{t}}$ \\
            \If{$t \normal{ mod } \uptau_1 \normal{ \textbf{and} } t \normal{ mod } \uptau_1\uptau_2 \ne 0$}{
                $i_{next} \leftarrow \p{i_{j+1}+c} \normal{ mod } \abs{\gN^{k}}$ \\
                $\w^{k,i_{next}}_{t} \leftarrow \w^{k,i}_{t}$
            }
        }
    }
    \If{$t \normal{ mod } \uptau_1\uptau_2 = 0$}{
        $\set{\w^{k,i}_{t}}_{k \in \gG, i \in \gN} \leftarrow \sum_{k \in \gG}{\sum_{i \in \gN^{k}}{\frac{\abs{\gD^{k,i}}}{\abs{\gD}}\w^{k,i}_{t}}}$
    }
}
\end{algorithm}

Algorithm \ref{alg:Tornadoes} shows the overall procedure of Tornadoes that takes the node set $\gN$ and the number of groups $\abs{\gG}$, chains $C$, epochs $\uptau_1$, and communication rounds $\uptau_2$ as input and returns the final model $\w_{T}$ as output. It begins by initializing all local models $\w^{k,i}_0$, a randomly permuted inter-node ring $\bkt{i_{j}}$, and group indices $\set{\gN^{k}}$ by clustering nodes\,(Lines 1--3). Then, for each group and each chain of the group, the local updates are performed at the node $i$\,(Lines 5--8); every $\uptau_1$ epochs, each local model is transferred to the next node $i_{next}$ within the same group $k$\,(Lines 9--11); every $\uptau_1\uptau_2$ steps, the global model is learned by aggregating all local models and then broadcasts back to all nodes\,(Lines 12--13). Overall, Lines 4--13 repeat for $T$ steps.

We derive another heuristic of TornadoAggregate with \textit{rings} architecture, called Tornado-rings, which is the same as Tornadoes without the global aggregation to develop a independent and specialized model for each group, i.e., Lines 12--13 of Algorithm \ref{alg:Tornadoes} are not executed for Tornado-rings. We note that, for the \textit{stars} and \text{rings}, the test performance are measured with the group model of each independent group.

\begin{algorithm}[t!]
\SetAlgoLined
\SetKwProg{Function}{function}{:}{}
\caption{Grouping Scheme}
\label{alg:Grouping}
\Function{\textsc{Group\_By\_IID}($\gN$)}{
    $\textsc{Cost}_{A}\p{i, k} \triangleq \textsc{EMD}\p{\gD^{k},\gD}$ \\
    $\textsc{Cost}_{U}\p{i, k} \triangleq \textsc{EMD}\p{\gD^{k,i}, \gD}$ \\
    \KwRet \textsc{Group}($\gN$, $\textsc{Cost}_{A}$, $\textsc{Cost}_{U}$)
}
\texttt{\\}
\Function{\textsc{Cluster}($\gN$)}{
    $\textsc{Cost}_{A}\p{i, k} \triangleq \textsc{EMD}\p{\gD^{k,i},\gD^{k}}$ \\
    $\textsc{Cost}_{U}\p{i, k} \triangleq \textsc{EMD}\p{\gD^{k,i}, \gD^{k}}$ \\
    \KwRet \textsc{Group}($\gN$, $\textsc{Cost}_{A}$, $\textsc{Cost}_{U}$)
}
\texttt{\\}
\Function{\textsc{Group}($\gN$, $\textsc{Cost}_{A}$, $\textsc{Cost}_{U}$)}{
    Select random medoid nodes $\gN_{m}$ of size $\abs{\gG}$ \\
    $\z \leftarrow \bkt{ \argmin_{ k \in \gG }{ \textsc{Cost}_{A}\p{i, k } } | \forall i \in \gN }$ \\
    \While{\normal{the last} $\textsc{Cost}_{A}$ \normal{is not steady}}{
        $ \gN_{m} \leftarrow \bkt{ \argmin_{ i \in \gN^{k} }{ \textsc{Cost}_{U}\p{i, k} } | \forall k \in \gG } $ \\
        $ \z \leftarrow \bkt{ \argmin_{ k \in \gG }{ \textsc{Cost}_{A}\p{i, k } } | \forall i \in \gN } $
    }
    \KwRet $\set{\set{i | (i, k) \in \z, k={k}^{\prime}}|{k}^{\prime} \in \gG}$
}
\end{algorithm}

Algorithm \ref{alg:Grouping} shows the \emph{two} grouping schemes: \textsc{Group\_By\_IID} and \textsc{Cluster}. Both functions define their own association cost $\textsc{Cost}_{A}$ and update cost $\textsc{Cost}_{U}$ and, in turn, call \textsc{Group} function with the defined costs. The costs are based on the EMD\,(earth mover distance) that can approximately model the learning divergences, as proposed by \citet{IIDSharing}, which can be expressed as Eq.~\eqref{eq:EMD}. In \textsc{Group\_By\_IID} function\,(Lines 1--4), a group data distribution $\gD^{k,i}$ is compared with the global dataset $\gD$ to improve the group-to-global divergence $\Delta$ of Eq.~\eqref{eq:GradientDivergence}, while in \textsc{Cluster} function\,(Lines 6--9), a local data distribution $\gD^{k,i}$ is compared with a group data distribution $\gD^{k}$ to improve the local-to-group divergence $\delta$ of Eq.~\eqref{eq:GradientDivergence}. It should be noted that for the $\textsc{Cost}_{U}$ of \textsc{Group\_By\_IID} function, we had no choice but to use $\gD^{k,i}$ instead of $\gD^{k}$ because a cost related to a node should be returned to determine a new medoid node.
\begin{equation}
\label{eq:EMD}
\begin{aligned}
& \textsc{EMD}\p{\gD_{1}, \gD_{2}} \\
& \triangleq \sum_{\forall class}{ \abs{ \gP\p{ y_{j} = class | j \in \gD_1 } - \gP\p{ y_{j} = class | j \in \gD_2 } } }
\end{aligned}
\end{equation}

The \textsc{Group} function aims at finding subsets of node indexes for all groups $\set{\gN^{k}}_{k=1 \ldots \abs{\gG}}$ such that it reduces the defined costs to the extent possible. For this purpose, it begins by selecting random medoid nodes $\gN_{m}$ of size $\abs{\gG}$\,(Line 12). Then, it iteratively updates $\z$ by minimizing $\textsc{Cost}_{A}$ for all nodes and $\textsc{Cost}_{U}$ for all groups until the cost is steady\,(Lines 14--16).

\begin{table}[t!]
\centering
\small
    \begin{tabular}{ccc} \toprule
        \textbf{Dataset} & \textbf{Initial Cost} & \textbf{Final Cost} \\ \midrule
        FedShakespeare & 0.391 & \makecell{0.375\\(4.3\% reduced)} \\
        MNIST & 0.728 & \makecell{0.474\\(53.6\% reduced)} \\ \bottomrule
    \end{tabular}
\caption{\textbf{Reduction of clustering cost in Algorithm~\ref{alg:Grouping}}.}
\label{tbl:ClusteringCost}
\end{table}

\begin{table*}[t!]
\centering
\small
\begin{tabular}{ccccccc} \toprule
    \textbf{Hierarchy} & \textbf{Algorithm} & \textbf{Architecture} & \makecell{\textbf{Grouping}\\\textbf{Scheme}} & \makecell{\textbf{Group}\\\textbf{Size}} & \textbf{\# Chain} & \makecell{\textbf{Communication}\\\textbf{Interval}} \\ \midrule
    Flat & FedAvg\,\cite{FedAvg} & \textit{STAR} & - & 1 & - & $\uptau=100$ \\ \midrule
    \multirow{5}{*}{\makecell{Consensus\\Group}} & HierFAVG\,\cite{HierFAVG} & \textit{STAR-stars} & Random & 5 & - & $\uptau_1=10,\uptau_2=10$ \\
    & Astraea\,\cite{Astraea} & \textit{STAR-rings} & IID & 2 & 1 & $\uptau_1=10,\uptau_2=10$ \\
    & MM-PSGD\,\cite{MM-PSGD} & \textit{RING-stars} & Cluster & 10 & 1 & $\uptau_1=10,\uptau_2=10$ \\
    & \textbf{Tornado\,(Proposed)} & \textit{RING-stars} & IID & 2 & 2 & $\uptau_1=10,\uptau_2=10$ \\
    & \textbf{Tornadoes\,(Proposed)} & \textit{STAR-rings} & Cluster & 10 & 10 & $\uptau_1=10,\uptau_2=10$ \\ \midrule
    \multirow{2}{*}{\makecell{Pluralistic\\Group}} & IFCA\,\cite{IFCA}) & \textit{stars} & Cluster & 10 & - & $\uptau=100$ \\
    & SemiCyclic\,\cite{SemiCyclic} & \textit{rings} & Random & 5 & 1 & $\uptau=100$ \\
    & \textbf{Tornado-rings\,(Proposed)} & \textit{rings} & Cluster & 10 & 10 & $\uptau=100$ \\ \bottomrule
\end{tabular}
\caption{\textbf{Algorithm parameters.}}
\label{tbl:Parameters}
\end{table*}

\section{Supplementary Evaluation}
\label{sec:SupplementaryEvaluation}

\subsection{Experimental Setting}

\subsubsection{Configuration} We used FedML\,\cite{chaoyanghe2020fedml}, one of the most widely used simulation frameworks for federated learning, on PyTorch 1.6.0 to extensively evaluate the performance of various datasets, models, and algorithms.

\subsubsection{Parameters} The parameters for both FedShakespeare on RNN and MNIST on logistic regression benchmarks followed those suggested by FedML. The benchmarks used SGD\,(Stochastic Gradient Descent) optimizer with the learning rate of 0.03. In addition, we randomly sampled 100 nodes for both train and test phase, out of 715 nodes for FedShakespeare and 1000 nodes for MNIST.

Table~\ref{tbl:Parameters} shows the parameters used for each algorithm. In particular, for the group size, we applied the aforementioned \emph{small ring} principle to all algorithms such that the group size of an algorithm with IID node grouping, random grouping, and node clustering is set to 2, 5, and 10, respectively, where 10 is considered a reasonably large value for the group size; for the number of chains, we applied the \emph{ring chaining} principle to the proposed TornadoAggregate heuristics such that the number of chains is set to the number of groups, which is the maximum value by definition; for the communication interval, we firstly determined the product of $\uptau_1$ and $\uptau_2$ of HierFAVG to be equal to $\uptau$ of FedAVG so that HierFAVG can improve accuracy by sacrificing little communication cost, as suggested by \citet{HierFAVG}, and then we set the same parameters as HierFAVG for the rest of algorithms.

\subsection{Additional Results}

\begin{figure*}[h!]
\centering
    \includegraphics[width=.5\linewidth]{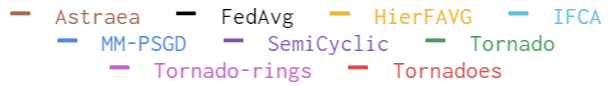} \\
    \begin{minipage}[t]{.4\linewidth}
        \includegraphics[width=\linewidth]{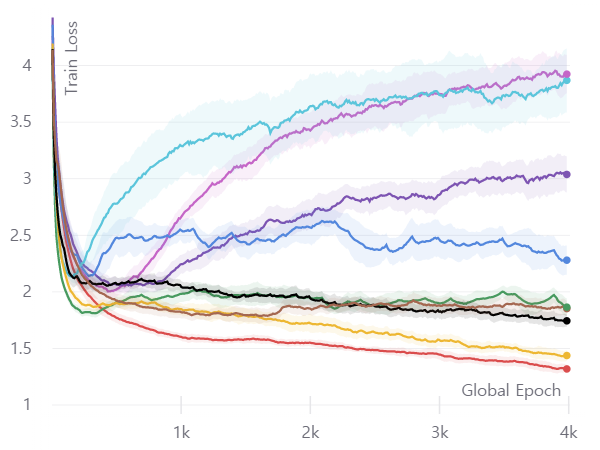}
        \subcaption{Train Loss.}
    \end{minipage}
    \begin{minipage}[t]{.4\linewidth}
        \includegraphics[width=\linewidth]{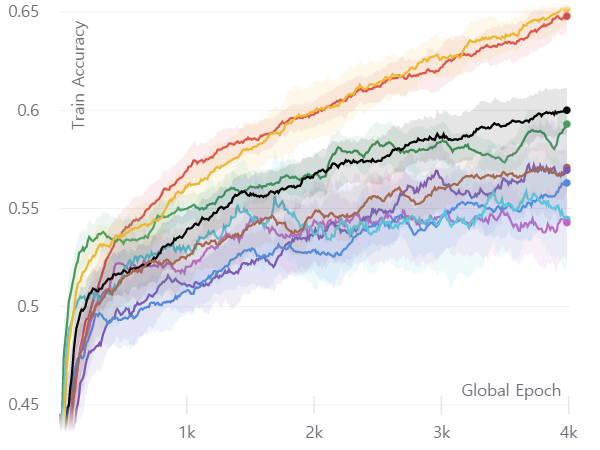}
        \subcaption{Train Accuracy.}
    \end{minipage}
\caption{\textbf{FedShakespeare}.}
\label{fig:FedShakespeare-Train}
\end{figure*}

Figure~\ref{fig:FedShakespeare-Train} shows the train loss and accuracy of \emph{nine} algorithms on FedShakespeare dataset. Even though HierFAVG seemingly outrun the others, compared with the test accuracy of HierFAVG in Figure~\ref{fig:FedShakespeare-TestAccuracy}, we can infer that it overfit towards the training dataset. Similar to the aforementioned results of IFCA, Tornado-rings performed bad because of the small reduction of clustering cost, defined in Algorithm~\ref{alg:Grouping}, for FedShakespeare dataset, as shown in Table~\ref{tbl:ClusteringCost}. Low accuracy of SemiCyclic algorithm can be attributed to the low data utilization with low number of active nodes, which is also pointed out by \citet{MM-PSGD}.

\begin{figure*}[t!]
    \centering
    \includegraphics[width=.5\linewidth]{figure/legend.png} \\
    \begin{minipage}[t]{.4\linewidth}
        \includegraphics[width=\linewidth]{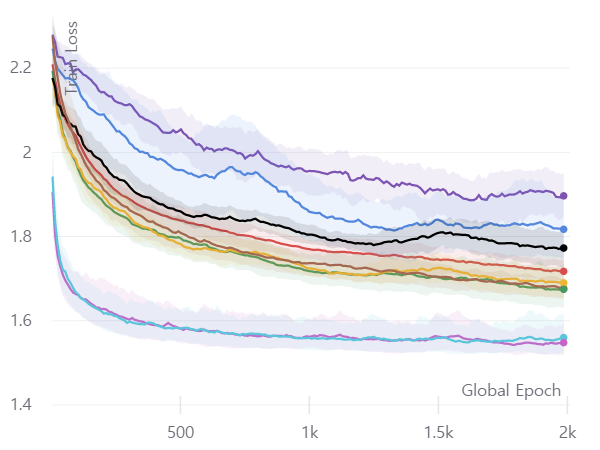}
        \subcaption{Train Loss.}
    \end{minipage}
    \begin{minipage}[t]{.4\linewidth}
        \includegraphics[width=\linewidth]{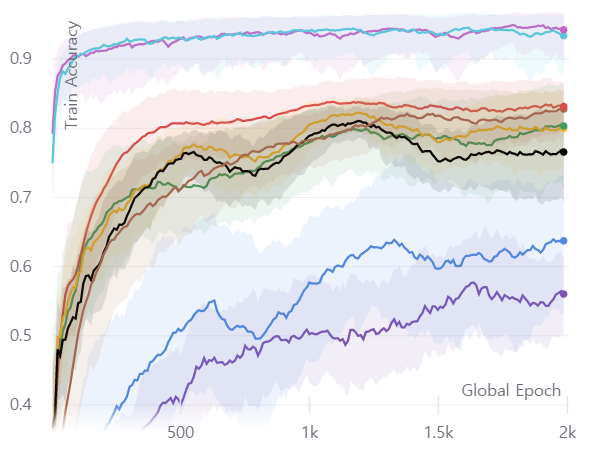}
        \subcaption{Train Accuracy.}
    \end{minipage}
    \caption{\textbf{MNIST}.}
\label{fig:MNIST-Train}
\end{figure*}

Figure~\ref{fig:MNIST-Train} shows the train loss and accuracy of all algorithms on MNIST dataset. Interestingly, in contrast to the results for FedShakespeare dataset, Tornado-rings significantly outperformed the others except for closely following IFCA. The reason why the worst performers became the best performers can also be attributed to the large reduction clustering cost, as shown in Table~\ref{tbl:ClusteringCost}. To strike the balance between the two extremes, we leave Tornado-rings as our future work. Aside from Tornado-rings and IFCA, Tornadoes outperformed the others and the rest of algorithms exhibited the similar performance trend to that from FedShakespeare.




\section{Future Directions}

We consider the following works orthogonal to our work, which can thus be easily extended to by TornadoAggregate.

\begin{itemize}
\item
    \textbf{Communication Reduction Techniques}: This category includes quantization\,\cite{FSVRG,konevcny2016federatedLearning}, compression\,\cite{sattler2019robust}, or dropout\,\cite{caldas2018expanding}.
\item
    \textbf{Communication-Aware Learning}: This category includes adaptive communication interval\,\cite{AdaptiveFL}, communication-constrained learning\,\cite{FedCS,HybridFL}, or multi-objective optimization of learning error and communication\,\cite{zhu2019multi}.
\item
    \textbf{Global-Information Sharing}: This category includes sharing a subset of global IID data samples\,\cite{IIDSharing,HybridFL}, sharing a subset of global data features to scale up the feature-related parameters of a local optimizer\,\cite{FSVRG}, or sharing a generative model that can produce an augmented IID dataset\,\cite{FedMultiTask, jeong2018communication}.
\end{itemize}

On the other hand, we aim at improving TornadoAggregate in the following directions.
\begin{itemize}
\item
    \textbf{Client Sampling}: Client sampling techniques\,\cite{FedAvg,FedProx,SampledFL} introduce a different variance aspect from those handled in this study.
\item
    \textbf{Peer-to-Peer Learning}: Other than the star and ring architectures, peer-to-peer\,(P2P) federated learning\,\cite{MATCHA,hegedHus2019gossip,MATCHA} should also be considered.
\item
    \textbf{Transfer Learning}: In all synchronizations, model parameters are set to the previously learned model parameters, but we can also consider transferring parameters between different types of models\,\cite{jeong2018communication,li2019fedmd,he2020group}.
\item
    \textbf{Continual Learning}: We can extend TornadoAggregate to the novel techniques in the field of continual learning, such as weight decomposition\,\cite{yoon2020federated} or loss regularization\,\cite{shoham2019overcoming}.
\item
    \textbf{Algorithm Optimization}: Other than EMD of Eq.~\eqref{eq:EMD}, IIDness can also be quantified by loss divergence\,\cite{SampledFL}, gradient divergence\,\cite{AdaptiveFL}, or weight divergence\,\cite{IIDSharing}.
\end{itemize}

\end{document}